\def\eqref#1{equation~\ref{#1}}
\def\1{\bm{1}}
\DeclareMathAlphabet{\mathsfit}{\encodingdefault}{\sfdefault}{m}{sl}
\SetMathAlphabet{\mathsfit}{bold}{\encodingdefault}{\sfdefault}{bx}{n}
\title{Thompson Sampling via Fine-Tuning of LLMs}
\author{Nicolas Menet$^{1,2}$, Aleksandar Terzić$^{1,2}$, Michael Hersche$^{1}$, Andreas Krause$^{2}$, Abbas Rahimi$^{1}$\\
\And
\normalfont $^{1}$IBM Research -- Zurich, $^{2}$Department of Computer Science, ETH Zürich}
\newtheorem{defn}{Definition}
\newtheorem{thm}{Theorem}
\newtheorem{prop}{Proposition}
\newtheorem{cor}{Corollary}
\newtheorem{lem}{Lemma}
\newcommand{\gptsft}{\textsc{ToSFiT}}
\newcommand{\vbos}{\textsc{VBOS}}
\newcommand{\ug}{\textsc{Unguided Generation}}
\newcommand{\pg}{\textsc{Post-Generation TS}}
\newcommand{\ac}{\textsc{Actor Critic}}
\newcommand{\sac}{\textsc{Soft Actor Critic}}
\newcommand{\acs}{\textsc{Actor Critics}}
\newcommand{\esllm}{\textsc{Evolutionary Search (LLM)}}
\newcommand{\eschar}{\textsc{Evolutionary Search (Character)}}
\newcommand{\es}{\textsc{Evolutionary Search}}
\newcommand{\fibo}{\textsc{FIBO}}
\definecolor{Green}{RGB}{0, 100, 0}
\definecolor{Red}{RGB}{200, 0, 0}
\definecolor{rebuttal}{RGB}{0, 0, 0}
\begin{document}

\maketitle

\begin{abstract}
Bayesian optimization in large unstructured discrete spaces is often hindered by the computational cost of maximizing acquisition functions due to the absence of gradients. We propose a scalable alternative based on Thompson sampling that eliminates the need for acquisition function maximization by directly parameterizing the probability that a candidate yields the maximum reward. Our approach, \textit{Thompson Sampling via Fine-Tuning} (\gptsft{}), leverages the prior knowledge embedded in prompt-conditioned large language models, and incrementally adapts them toward the posterior. Theoretically, we derive a novel regret bound for a variational formulation of Thompson Sampling that matches the strong guarantees of its standard counterpart. Our analysis reveals the critical role of careful adaptation to the posterior probability of maximality---a principle that underpins our \gptsft{} algorithm. Empirically, we validate our method on three diverse tasks: FAQ response refinement, thermally stable protein search, and quantum circuit design. \textcolor{rebuttal}{Within a collection of methods covering in-context Bayesian optimization, reinforcement learning, and evolutionary search, \gptsft{} exhibits both state-of-the-art sample efficiency and computational efficiency. \footnote{Code at \url{https://github.com/IBM/thompson-sampling-via-fine-tuning-of-llms}.}}
\end{abstract}

\section{Introduction}\label{sec:introduction}

Humans rely on beliefs, shaped by internal world models, to make decisions. Since our knowledge of the world is often incomplete, these beliefs must account for uncertainty through probabilistic reasoning. To stay effective, beliefs must be updated as new information arises. Bayesian inference provides a principled method for this updating process, using Bayes’ theorem to adjust beliefs by combining prior knowledge with new evidence. This foundational idea extends naturally to computational settings, where algorithms must make decisions under uncertainty. 

Bayesian optimization \citep{kushner1964new, garnett_bayesoptbook_2023} is one such algorithmic framework that leverages Bayesian inference for large-scale experimental design and automated discovery, particularly in settings where experimental evaluations are costly or time-consuming. As a strategy for optimizing expensive black-box reward functions, it maintains a posterior distribution over the unknown rewards---typically modeled as a Gaussian process over a domain $X$---and uses this model to guide the search for promising configurations. New candidates are selected by maximizing an acquisition function that balances two objectives: exploring uncertain regions to gather new information, and exploiting areas that are already known to perform well \citep{kushner1964new, auer2002using}.

\begin{wrapfigure}{R}{0.4\linewidth}
    \vspace{-4pt}
    \centering    \includegraphics[width=\linewidth]{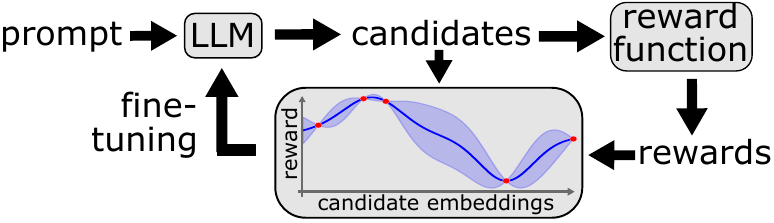}
    \caption{\gptsft{} treats candidate generation as Thompson sampling and fine-tunes the LLM to the posterior PoM.}\label{fig:method_overview}
\end{wrapfigure}

Among existing acquisition strategies, Thompson sampling \citep{thompson1933likelihood, russo2018tutorial} stands out due to its state-of-the-art convergence guarantees \citep{russo2014learning, russo2016information, chowdhury2017kernelized} and its strong empirical performance \citep{chapelle2011empirical}, despite being one of the earliest strategies for Bayesian optimization. Thompson sampling draws a function from the reward posterior, which effectively serves as an acquisition function, and selects the point that maximizes it. The resulting evaluation points are distributed according to the probability of maximality (PoM) of rewards \citep{menet2025lite}.

While acquisition function maximization has been made tractable in high-dimensional Euclidean spaces---through projection to lower dimensions \citep{wang2016bayesian, kirschner2019adaptive} or local optimization \citep{eriksson2019scalable}---it remains a fundamental challenge in large unstructured discrete domains, where the absence of gradients precludes efficient search. Yet, such spaces are of high scientific and economic relevance. Notable examples include the space of amino acid sequences \citep{jumper2021highly, liu2023deep} and the space of valid code for quantum circuits \citep{javadiabhari2024quantumcomputingqiskit, vishwakarma2024qiskit}. Note that with 20 amino acids and a maximum sequence length of $100$, the search space already exceeds the number of atoms in the observable universe.

\begin{figure*}[t]
    \centering
    \includegraphics[width=\linewidth]{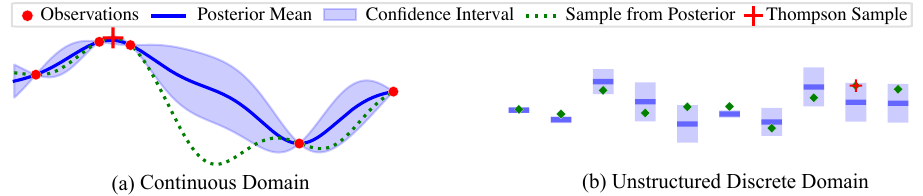}
    \caption{Whereas in continuous domains gradient ascent can be used to maximize the acquisition function, in unstructured discrete domains maximization would require iterating over all points. To scale to combinatorially large discrete domains, we propose to circumvent intractable acquisition function maximization by directly sampling from the parameterized probability of maximality.}
    \label{fig:TS_and_premise}
\end{figure*}

In this work, we scale Bayesian optimization to large unstructured discrete spaces by instantiating Thompson Sampling as fine-tuning of large language models (\gptsft{}, see Figure~\ref{fig:method_overview}). Our approach builds on Variational Bayesian Optimistic Sampling (\vbos{}), established by \citeauthor{o2021variational} \citeyearpar{o2021variational}, but introduces a key distinction: instead of optimizing toward the posterior probability of maximality starting from a uniform policy, we initialize it with a prompt-conditioned pre-trained LLM, and gradually adapt it toward the posterior. By treating generated proposals as Thompson samples, \gptsft{} avoids intractable acquisition function maximization. Our contributions are:
\begin{enumerate}
    \item We improve the cumulative regret bound of exact \vbos{} from $\smash{\textstyle \tilde{\mathcal O}(\sqrt{T|X|})}$ to $\smash{\textstyle \tilde {\mathcal O}(\sqrt{T \gamma^T})}$, accounting for reward correlation across the search space $X$ through the maximal information gain $\gamma^T$, and generalize the bound to cover approximate (e.g., gradient-based) \vbos{}.
    \item Our novel regret bound motivates policy initialization according to pre-training \& context as well as deliberate fine-tuning to the posterior PoM, resulting in the \gptsft{} algorithm.
    \item We validate our method on FAQ response refinement, protein optimization, and quantum circuit design. \textcolor{rebuttal}{Within a collection of methods covering Bayesian optimization, reinforcement learning, as well as evolutionary search, ToSFiT exhibits state-of-the-art performance.}
\end{enumerate}
\section{Preliminaries}\label{sec:preliminaries}
Consider an unknown reward vector $R$ over a discrete domain $X$. In Bayesian optimization, we want to find the $x\in X$ that maximizes $R_x$. Typically, we model our prior belief as Gaussian, i.e., $R \sim \mathcal N(\mu, K)$ with mean $\mu$ and covariance $K$.
Thompson sampling suggests conducting Bayesian optimization by repeatedly conditioning the Gaussian reward vector $R$ on a noisy observation $Y_x := R_x + \mathcal N(0, \sigma_n^2)$ at a point $x \in X$ sampled from the posterior probability of maximality
\begin{equation*}
    \mathrm{PoM}(x | \mathrm{data}) := \mathbb P[R_x = R^*\, |\, \mathrm{data}] \text{ with } R^*\! :=\! \max\nolimits_z R_z.
\end{equation*}
In practice, sampling from the probability of maximality is typically implemented by drawing a realization from the reward posterior and then selecting the point that maximizes this sample---effectively treating the realization as an acquisition function (\citeauthor{russo2018tutorial} \citeyear{russo2018tutorial}, see Figure~\ref{fig:TS_and_premise}). However, in large unstructured discrete domains, such maximization becomes intractable. To address this, we propose to directly parameterize the probability of maximality using a generative LLM. This approach, illustrated in Figure~\ref{fig:method_overview}, avoids explicit maximization and, following the framework of \citeauthor{o2021variational} \citeyearpar{o2021variational}, ensures consistency with the underlying reward model and thus vanishing regret (see Theorem~\ref{thm:regret_bound_of_approximate_thompson_sampling}). 
\subsection{Batched Bayesian Optimization}
Traditionally, Bayesian optimization is performed sequentially: one candidate is selected, evaluated, and used to update the Gaussian reward model before the next candidate is chosen. However, in many real-world applications---such as drug discovery \citep{houghten2000parallel} or automated program synthesis \citep{romera2024mathematical}---simultaneous candidate evaluation can significantly reduce wall-clock time, motivating the development of batched Bayesian optimization \citep{ginsbourger2010kriging, azimi2012hybrid, desautels2014parallelizing, wang2018batched}.
Classical Bayesian optimization methods typically propose a single, deterministic candidate, and therefore require additional mechanisms to promote diversity across the batch and avoid redundant evaluations. In contrast, Thompson sampling naturally generates diverse candidates by independently sampling from the posterior \citep{hernandez-lobato17a}, achieving equal asymptotic regret as in the sequential setting \citep{kandasamy2018parallelised, nava2022diversified}. This property makes Thompson sampling particularly well-suited for batched optimization, a key feature of \gptsft{}.

\subsection{Regret Bounds for Bayesian Optimization}\label{sec:model_agnostic_regret_bounds}
Bayesian optimization algorithms such as Thompson sampling and Upper Confidence Bound (UCB, \citeauthor{srinivas2009gaussian} \citeyear{srinivas2009gaussian}) are typically framed as balancing exploration with exploitation \citep{kushner1964new, auer2002using} such that, assuming the black-box reward function $R:X \to \mathbb R$ is drawn from a Gaussian process, in expectation cumulative regret grows sublinearly:
\begin{equation*}
    \textstyle \mathbb E[\sum\nolimits_{t=1}^T R^* - R_{x^t}] \in o(T).
\end{equation*}
Here, $x^t$ denotes the action played by the Bayesian optimization algorithm conditioned on a history of observations $\mathcal H^t$. Note that sublinear cumulative regret is equivalent to vanishing average regret, i.e., asymptotically optimal performance.

Beyond the Bayesian setting, regret guarantees can also be established for the model-agnostic setting that assumes the reward $r$ to lie in a reproducing kernel Hilbert space $\mathbb H$ induced by a kernel $K:X \times X \to \mathbb R_+$.\footnote{The expressivity of a reproducing kernel Hilbert space can vary widely with the choice of its kernel. For instance, the linear kernel induces the reproducing kernel Hilbert space of (all) linear functions, whereas the radial basis function kernel corresponds to a smooth subset of the square-integrable functions $L_2$ that can uniformly approximate any continuous function on a compact set.} In this case, a Gaussian reward model is still used for inference, but its amplitude (i.e., prior uncertainty) must be chosen large enough to accommodate the complexity of $r$, as measured by the norm $\|r\|_K$. Notable examples of such model-agnostic algorithms include GP-UCB \citep{srinivas2009gaussian} and GP-TS \citep{chowdhury2017kernelized}, a variant of Thompson sampling.

In practice, however, hyperparameters such as the amplitude are often fit to observations via marginal likelihood maximization (MLM), which can underestimate the true complexity of the reward function~\citep{berkenkamp2019no}. To avoid overconfident posterior estimates and premature convergence, it is common to adopt a multiplicative \textit{exploration bonus} on the MLM-fitted prior amplitude. This ensures sufficient uncertainty for effective exploration---a strategy we incorporate into \gptsft{} to account for the fixed-reward setting used in our experiments.

\subsection{Variational Bayesian Optimistic Sampling}\label{sec:vbos}
As shown in \citeauthor{o2021variational} \citeyearpar{o2021variational} and \citeauthor{tarbouriech2024probabilistic} \citeyearpar{tarbouriech2024probabilistic},  Gaussian PoM is effectively approximated by the \vbos{} policy $\tilde \pi$, the distribution that maximizes the functional
\begin{equation}\label{eq:vbos_variational_form}
    \mathcal V(\pi) := \mathbb E_{x \sim \pi} [ \mu_x + \underbrace{\sqrt{-2\ln ({\pi_x})}}_{\mathclap{\text{adaptive UCB exploration bonus}}} \cdot\, \sigma_{x} ], \text{ where } \pi \text{ is a distribution over $X$ and } \sigma_x := \sqrt{K_{x,x}}.
\end{equation}
Here, the reward surrogate $\textstyle \hat r_x := \mu_x + \sqrt{-2\ln ({\pi_x})} \cdot \sigma_x$ represents a high-probability upper bound on $R_x$ \citep{o2021variational}. \vbos{} reveals an intimate connection \citep{tarbouriech2024probabilistic, menet2025lite} between PoM and upper confidence bounds augmented with entropy regularization~\citep{ziebart2010modeling, haarnoja2018soft}. \textcolor{rebuttal}{Moreover, $\max_{\pi} \mathcal V(\pi) \geq \mathcal V(\mathrm{PoM}) \geq \mathbb E[R^*]$ (see Corollary~\ref{cor:gaussian_expected_maximum_upper_bound} in Appendix~\ref{sec:proofs}), which is the key property that allows bounding the expected cumulative regret by $\tilde{\mathcal{O}}(\sqrt{T})$ for any optimistic policy $\pi$ with $\mathcal V(\pi) \geq \mathbb E[R^*]$.}
In addition to its variational form, \citeauthor{menet2025lite} \citeyearpar{menet2025lite} pointed out a near-closed form with almost-linear runtime in $|X|$, given by 
\begin{equation}\label{eq:vbos_closed_form}
    \textstyle \tilde\pi_x = v(\frac{\mu_x - \kappa^*}{\sigma_x}) \text{ for } v(c) := \exp(-(\sqrt{c^2 + 4} - c)^2/8) \text{ with } \kappa^* \text{ such that } \sum\nolimits_x \tilde\pi_x = 1.
\end{equation}
In combinatorially large domains $X$, even the near-closed form of Equation~(\ref{eq:vbos_closed_form}) becomes too expensive, despite its almost-linear runtime in $|X|$. Instead, an approximate policy $\pi \approx \tilde \pi := \arg\max_{p\in \Delta^{|X|\text{-}1}} \mathcal V(p)$ based on gradient ascent of Equation~(\ref{eq:vbos_variational_form}) must suffice. Our theoretical results in Section~\ref{sec:theory} reveal the challenges of Bayesian optimization with gradient-based policy updates. As a remedy to accelerate convergence, we cast Thompson sampling as fine-tuning of LLMs.
\section{Thompson Sampling via Fine-Tuning}\label{sec:method}
We introduce \textit{Thompson Sampling via Fine-Tuning} (\gptsft{}), a scalable variant of Thompson sampling that leverages strong priors from generative pre-training and task-dependent in-context conditioning. \gptsft{} does not maximize an acquisition function, but rather parameterizes the PoM with a pre-trained prompt-conditioned large language model (see Figure~\ref{fig:method_overview}). By considering model generations as samples from the PoM, we avoid expensive acquisition function maximization. To stay consistent with the posterior PoM and achieve sublinear cumulative regret, we initialize the policy according to the pre-training and then cautiously adapt the model parameters using the \vbos{} objective. This algorithmic design is guided by the theoretical analysis in Section~\ref{sec:theory}.

\begin{algorithm}[h]
    \caption{\gptsft{} \textcolor{rebuttal}{with Gaussian Process Reward Model}}
    \label{alg:main}
    \begin{minipage}{0.6\linewidth}
    \begin{algorithmic}
        \Require pre-trained policy 
        $\pi^{\theta}$, GP feature map $\phi$
        \State Sample $x_1, \ldots, x_m \sim \pi^{\theta}$ and observe $y_1, \ldots, y_m$
        \State Conduct GP marginal likelihood maximization
        \While{budget not exhausted}
            \State $\mu^\phi, K^\phi \gets$ closed-form Gaussian posterior
            \For{$j=1, \ldots, c$}
                \State Generate $x_1, \ldots, x_B \sim \pi^\theta$
                \State Estimate $\tfrac{d}{d\theta} \mathcal V_{\mu^\phi, K^\phi}(\pi^\theta)$ using $x_1, \ldots, x_B$
                \State Fine-tune $\pi^\theta$ toward \vbos{} with learning rate $\eta$
            \EndFor
            \State Observe $y_1, \ldots, y_b$ associated with $x_1, \ldots, x_b$
            \State Conduct GP marginal likelihood maximization
        \EndWhile
        \State \Return $(x_{\arg\max_i y_i}, \max\nolimits_i y_i)$
    \end{algorithmic}
    \end{minipage}\hfill
    \begin{minipage}{0.4\linewidth}
        \paragraph{Hyperparameters}
$m \in \mathbb N$ denotes a burn-in period to find prior parameters of the GP and allow the closed-form solution $\tilde \pi^t$ of the \vbos{} objective to approach the generative policy $\pi^\theta$ from pre-training (more on that in Section~\ref{sec:theory}). $c \in \mathbb N$ is the number of steps of gradient ascent per step of Bayesian optimization and trades off sample efficiency with computational efficiency. $B \in \mathbb N$ is the generation batch size: a large batch size improves GPU utilization and leads to more stable gradients. $b \in \{1, \ldots, B\}$ denotes the batch size of Bayesian optimization. $\eta$ is the global learning rate.
    \end{minipage}
\end{algorithm}

\subsection{Scaling Gaussian Processes}
The Moore–Aronszajn theorem states that any kernel $K$ can be expressed as an inner product in a reproducing kernel Hilbert space $\mathbb H$, i.e., $K(x,z) = \langle \phi(x), \phi(z)\rangle_\mathbb{H}$ for a feature map $\phi : X \to \mathbb H$. Thus we assume, without loss of generality, a linear kernel in $\mathbb H$, enabling scalable Gaussian process inference. Flexibility is retained through the choice of feature map $\phi$, which can be obtained from fixed embedding models \citep{rankovic2023bochemian, wei2022more}, learned adaptively during optimization \citep{rankovic2025gollum}, or designed for the task at hand \citep{greenhill2020bayesian}.

Furthermore, as detailed in Appendix~\ref{sec:constant_time_gp_inference_with_linear_kernels}, we leverage a formulation of linear Gaussian processes that enables conditioning on observations, computing the reward posterior, and performing marginal likelihood maximization---all in closed form. As a result, both the computational and memory complexity scale in $\Theta(\text{dim}(\mathbb H)^2)$, independently of the number of observations, and has negligible overhead (see Table~\ref{tab:wall_clock_overhead} in Section~\ref{sec:computational_cost} of the Appendix).

\subsection{Gradients of \textcolor{rebuttal}{Variational Bayesian Optimistic Sampling}}
We derive explicit gradients of the \vbos{} objective, offering new insights into its optimization landscape. All proofs are in Appendix~\ref{sec:proofs}. Note that the score trick~\citep{williams1992simple} cannot be applied blindly, since the reward surrogate $\textstyle \hat r_x := \mu_x + \sqrt{-2\ln ({\pi_x^\theta})} \cdot \sigma_x$ depends on the policy $\pi^\theta$.
\begin{prop}\label{prop:derivatives_of_vbos}
    Consider the {\normalfont\vbos{}} objective $\mathcal V
        (\pi^\theta) := \mathbb E_{x \sim \pi^\theta} [ \mu_x + \sqrt{2 \ln(1/{\pi^\theta_x})} \cdot\, \sigma_{x} ]$. The {\normalfont\vbos{}} objective $\mathcal V$ is concave (strictly if
    $\sigma_x > 0 \ \forall x$) and its gradients are
    \begin{equation*}
        \tfrac{d}{d\theta} \mathcal V(\pi^\theta) = \mathbb E_{x \sim \pi^\theta} \big[ \big(\mu_x \underbrace{- \xi -v^{\text{-}1}(\pi^\theta_x) \cdot \sigma_{x}}_{-\mu_x^{\theta} \text{ for } \xi = \kappa} 
        \big) \cdot \tfrac{d}{d\theta} \ln \pi^\theta_x \big].
    \end{equation*}
    $\xi \in \mathbb R$ is an arbitrary baseline and $\textstyle \text{-}v^{\text{-}1}(u) = \sqrt{\text{-}2 \ln({u})} - 1/\sqrt{\text{-}2 \ln({u}}) \sim \sqrt{\text{-}2 \ln({u})}$ as $u \to 0$.
\end{prop}
\paragraph{Interpretation as Energy-Based Model} For a given reward uncertainty $\sigma_x$ and (parametrized) probability of maximality $\pi^\theta_x$ there is, according to Equation~(\ref{eq:vbos_closed_form}) and up to $\kappa \in \mathbb R$, exactly one consistent expected reward given by $\mu_x^\theta := \kappa + v^{\text{-}1}(\pi^\theta_x) \cdot \sigma_x$. Gradient ascent on $\mathcal V$ thus pushes up the probability of sample generation if and only if $\mu_x^\theta$ underestimates the true $\mu_x$. In this light, \vbos{} can be considered an energy-based model \citep{lecun2006tutorial}.

\subsection{Stabilizing Gradients of \vbos{}}
To ensure vanishing regret according to Theorem~\ref{thm:regret_bound_of_approximate_thompson_sampling} of Section~\ref{sec:theory}, we must fine-tune the prompt-conditioned pre-trained policy $\pi^{\theta}$ toward the posterior PoM using gradient ascent on the \vbos{} objective. Define the pseudo reward $\hat{\hat r}^{\theta}_x := \mu_x - v^{-1}(\pi^\theta_x) \cdot \sigma_x$ that occurs in Proposition~\ref{prop:derivatives_of_vbos}. Then
\begin{equation*}
    \textstyle \frac{d}{d\theta} \mathcal V(\pi^\theta) \approx \tfrac{1}{B} \sum_{i}(\hat{\hat r}^{\theta}_{x_i} - \xi_i) \cdot \frac{d}{d\theta} \ln \pi^\theta_{x_i} \text{ with } x_i \sim \pi^\theta.
\end{equation*}
Here, $\hat{\hat r}^{\theta}_{x_i} - \xi_i$ is referred to as the advantage function.
In practice, this estimator can suffer from high variance, depending on the choice of baselines $\xi_i \in \mathbb R$. To address this, we adopt the Reinforce Leave-One-Out (RLOO) baseline \citep{kool2019buy}, where each $\xi_i$ is set to the average of the other surrogates in the batch: $\textstyle \xi_i = \tfrac{1}{B-1} \sum_{j \not= i} \hat{\hat r}^{\theta}_{x_j}$.
This technique has been demonstrated to outperform more complex alternatives like Proximal Policy Optimization \citep{schulman2017proximal} in fine-tuning large language models \citep{ahmadian2024back}. 
To further stabilize learning, we normalize the advantage function by its empirical standard deviation, derived from the empirical second moments: 
    $\textstyle \widehat{\text{advantage std}} = \sqrt{\tfrac{1}{B} \sum_h (\hat{\hat r}^{\theta}_{x_h} - \xi_h)^2}$,
which amounts to a variance-adaptive learning rate. As shown in Proposition~\ref{prop:srloo_equals_grpo} of Appendix~\ref{sec:equivalence_between_GRPO_and_RLOO}, standardized RLOO is mathematically equivalent to the advantage function used in Group Relative Policy Optimization \citep{shao2024deepseekmath}.
\section{Theory}\label{sec:theory}
Given that one cannot exactly maximize \vbos{} in practice without incurring prohibitive computational cost, how close is any policy to the maximizer? Geometrically, the divergence of a policy $\pi$ from the exact \vbos{} policy $\tilde \pi$ is measured by the suboptimality gap in the \vbos{} objective:
\begin{prop}\label{prop:bregman_divergence}
    Let $\sigma \in \mathbb R_+^{|X|}$. For the convex $\textstyle f(p) := - \sum_x p_x \sigma_x \sqrt{-2 \ln p_x}$, define the Bregman divergence $D_\sigma(p,q) = f(p) - f(q) - \langle \nabla f(q), p-q\rangle$. Then the Bregman divergence of any $\pi \in \Delta^{|X|-1}$ from the maximizer $\textstyle \tilde \pi := \arg\max_{p\in \Delta^{|X|-1}}
    \mathcal V(p)$
    is given by $D_\sigma (\pi, \tilde \pi) = \mathcal V(\tilde \pi) - \mathcal V(\pi)$.
\end{prop}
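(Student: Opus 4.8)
The plan is to reduce everything to the elementary identity $\mathcal V(p) = \langle \mu, p\rangle - f(p)$, which holds verbatim because $\sqrt{2\ln(1/p_x)} = \sqrt{-2\ln p_x}$, so that $\mathcal V(\pi) = \mathbb E_{x\sim\pi}[\mu_x] + \sum_x \pi_x\sigma_x\sqrt{-2\ln\pi_x} = \langle\mu,\pi\rangle - f(\pi)$. Since $p\mapsto\langle\mu,p\rangle$ is linear, concavity of $\mathcal V$ (Proposition~\ref{prop:derivatives_of_vbos}) is exactly convexity of $f$, so $D_\sigma$ is a bona fide Bregman divergence and the statement is well-posed.

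Next I would substitute this decomposition into both sides. Writing $D_\sigma(\pi,\tilde\pi) = f(\pi) - f(\tilde\pi) - \langle\nabla f(\tilde\pi),\pi-\tilde\pi\rangle$ and $\mathcal V(\tilde\pi) - \mathcal V(\pi) = f(\pi) - f(\tilde\pi) + \langle\mu,\tilde\pi-\pi\rangle$, the $f(\pi)-f(\tilde\pi)$ terms cancel and the claimed equality collapses to
\[
\langle\nabla f(\tilde\pi) - \mu,\ \pi-\tilde\pi\rangle = 0 \quad\Longleftrightarrow\quad \langle\nabla\mathcal V(\tilde\pi),\ \pi-\tilde\pi\rangle = 0 \qquad\forall\,\pi\in\Delta^{|X|-1},
\]
which is precisely the first-order stationarity condition for the constrained maximization defining $\tilde\pi$. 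To establish it, I would first argue that $\tilde\pi$ lies in the relative interior of the simplex: as $p_x\to 0^+$ one has $\tfrac{d}{dp_x}\big(p_x\sigma_x\sqrt{-2\ln p_x}\big) = \sigma_x\big(\sqrt{-2\ln p_x} - 1/\sqrt{-2\ln p_x}\big)\to+\infty$ whenever $\sigma_x>0$, so shifting mass onto any currently-zero coordinate strictly increases $\mathcal V$; hence no coordinate of $\tilde\pi$ vanishes. (Equivalently, invoke the closed form $\tilde\pi_x = v((\mu_x-\kappa^*)/\sigma_x)$ of Equation~\ref{eq:vbos_closed_form} and note $v>0$ everywhere.) With the nonnegativity constraints inactive, the Lagrange condition for the single equality constraint $\sum_x p_x = 1$ gives $\nabla\mathcal V(\tilde\pi) = \lambda\mathbf 1$ for some $\lambda\in\mathbb R$, and concavity of $\mathcal V$ certifies this stationary point as the global maximizer. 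Since $\langle\mathbf 1,\pi-\tilde\pi\rangle = \sum_x\pi_x - \sum_x\tilde\pi_x = 0$ for any $\pi\in\Delta^{|X|-1}$, we get $\langle\nabla\mathcal V(\tilde\pi),\pi-\tilde\pi\rangle = \lambda\langle\mathbf 1,\pi-\tilde\pi\rangle = 0$, which finishes the proof.

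The only real obstacle is the interiority argument; the rest is bookkeeping. It matters because if $\tilde\pi$ touched the boundary the stationarity condition would weaken to $\langle\nabla\mathcal V(\tilde\pi),\pi-\tilde\pi\rangle\le 0$ (so the identity would degrade to the bound $D_\sigma(\pi,\tilde\pi)\le\mathcal V(\tilde\pi)-\mathcal V(\pi)$), and, worse, $\nabla f(\tilde\pi)$ would be undefined there since its entropy-like component blows up. The divergence of that gradient near the boundary — or directly the strict positivity of $v$ — is exactly where the hypothesis $\sigma\in\mathbb R_+^{|X|}$ (strictly positive entries) is used.
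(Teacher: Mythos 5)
Your proof is correct and follows essentially the same route as the paper's: both rest on the decomposition $\mathcal V(p) = \langle \mu, p\rangle - f(p)$, the blow-up of the gradient at the simplex boundary to place $\tilde\pi$ in the relative interior, and the resulting Lagrange condition $\nabla \mathcal V(\tilde\pi) = c\mathbf 1$ combined with $\langle \mathbf 1, \pi - \tilde\pi\rangle = 0$. Your write-up is, if anything, slightly more explicit about the interiority step and about why strict positivity of $\sigma$ is needed.
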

Even the exact \vbos{} policy $\tilde \pi$ only conducts approximate Thompson sampling. Nevertheless, \citeauthor{o2021variational} \citeyearpar{o2021variational} prove an upper bound on the cumulative regret incurred when sampling a bandit according to exact \vbos{}, i.e., for $x^t \!\sim\! \tilde \pi^t$. The bound reads $\textstyle \mathbb E[  \sum_t R^* - R_{x^t}] \leq \sqrt{2 |X| T \ln |X| (1\!+\!\ln T)}$.
Note that the structure of the kernel is not taken into account, i.e., the worst-case bandit with independent arms is assumed. In Theorem~\ref{thm:regret_bound_of_approximate_thompson_sampling}, we demonstrate a significantly tighter regret bound for \vbos{}, matching the strong regret bounds of Thompson sampling \citep{russo2014learning} and GP-UCB \citep{srinivas2009gaussian}, regardless of the kernel.
\begin{thm}\label{thm:regret_bound_of_approximate_thompson_sampling}
    Let $R \sim \mathcal N(\mu, K)$ with $K_{x,x} \leq 1$ $\forall x \in X$ and additive observation noise $\mathcal N(0, \sigma^2_{n})$.\footnote{The theorem also holds for heteroscedastic additive Gaussian noise by replacing $\sigma_n$ with $\max_{x\in X} \sigma_{n}(x)$.} If $R$ is observed at $x^t \sim \pi^t$ for a policy $\pi^t$ depending on history $\mathcal H^t$, then
    \begin{equation*}
        \phantom{\displaystyle \sum}\!\!\!\!\!\!\!\! \mathbb E[ \textstyle \sum_{t=1}^T R^* - R_{x^t}] \leq \sqrt{C_{\sigma_{n}} H T \gamma^T}\ +\ \mathbb E\sum_{t=1}^T \! D_{\sigma^t}(\pi^t, \tilde \pi^{t}).
    \end{equation*}
    $\textstyle C_{\sigma_n} := 4/{\ln(1+\sigma^{-2}_n)}$ is a constant, $\textstyle H := \tfrac{1}{T} \sum_t H[\pi^t | \mathcal H^t]$ is the expected average entropy of the policy and hence upper bounded by $\ln |X|$, $\textstyle \gamma^T := \max\nolimits_{L^T} I(Y_{L^T}; R)$ is the maximum information gain for $T$ observation locations $L^T$, and $\textstyle \tilde \pi^{t}$ is the unconstrained maximizer of {\normalfont\vbos{}} given $\mathcal H^t$.
\end{thm}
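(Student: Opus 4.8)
The plan is to decompose the cumulative regret into an ``exact-\vbos{}'' term plus a policy-suboptimality term, and then to bound the former by the information-theoretic machinery used for GP-UCB and Thompson sampling. Concretely, for each round $t$ let $\hat r^t_x := \mu^t_x + \sqrt{-2\ln \tilde\pi^t_x}\,\sigma^t_x$ be the \vbos{} reward surrogate built from the \emph{exact} maximizer $\tilde\pi^t$, where $\mu^t,\sigma^t$ are the posterior mean and standard deviation given $\mathcal H^t$. First I would write the instantaneous regret as
\begin{equation*}
    R^* - R_{x^t} = \big(R^* - \hat r^t_{x^t}\big) + \big(\hat r^t_{x^t} - R_{x^t}\big),
\end{equation*}
and take expectations. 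For the second summand one uses that $\hat r^t_x$ is, by construction (cf.\ the \vbos{} upper-bound property cited after Equation~(\ref{eq:vbos_variational_form})), a high-probability upper confidence bound on $R_x$ whose ``width'' is $\sqrt{-2\ln\tilde\pi^t_x}\,\sigma^t_x$; summing the widths at the played points $x^t\sim\tilde\pi^t$ and invoking the standard information-gain inequality $\sum_t (\sigma^t_{x^t})^2 \le C_{\sigma_n}\gamma^T$ (Srinivas et al.), together with Cauchy--Schwarz, produces the $\sqrt{C_{\sigma_n} H T \gamma^T}$ term, with $H = \tfrac1T\sum_t \mathbb E\,H[\tilde\pi^t\mid\mathcal H^t]$ appearing because $\mathbb E_{x\sim\tilde\pi^t}[-\ln\tilde\pi^t_x] = H[\tilde\pi^t]$. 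For the first summand I would use the classical Thompson-sampling ``posterior consistency'' argument: since $x^t$ and the true maximizer $x^*$ are (under exact \vbos{}, approximately) identically distributed given $\mathcal H^t$, $\mathbb E[R^* - \hat r^t_{x^t}\mid\mathcal H^t] = \mathbb E[\hat r^t_{x^*} - \hat r^t_{x^t}\mid\mathcal H^t]$ up to the UCB-overshoot probability, and $\hat r^t_{x^*}\ge R^* = R_{x^*}$ on the high-probability event; the low-probability complement contributes only a lower-order additive constant that is absorbed (this is where the precise bookkeeping, and the choice of confidence level, is the fiddly part).

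The crucial new ingredient, and the step I expect to be the main obstacle, is handling the \emph{approximate} policy $\pi^t\ne\tilde\pi^t$. Here I would insert $\tilde\pi^t$ as a pivot: write
\begin{equation*}
    \mathbb E_{x\sim\pi^t}[\,g^t_x\,] = \mathbb E_{x\sim\tilde\pi^t}[\,g^t_x\,] + \big(\mathbb E_{x\sim\pi^t} - \mathbb E_{x\sim\tilde\pi^t}\big)[\,g^t_x\,]
\end{equation*}
for the relevant per-round functional $g^t$, and the task is to show the correction is controlled by the Bregman divergence $D_{\sigma^t}(\pi^t,\tilde\pi^t) = \mathcal V(\tilde\pi^t) - \mathcal V(\pi^t)$ of Proposition~\ref{prop:bregman_divergence}. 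The natural route is: the quantity we care about is essentially $\mathcal V$ evaluated along $\pi^t$ versus $\tilde\pi^t$, because $\mathbb E_{x\sim\pi}[\mu^t_x + \sqrt{-2\ln\pi_x}\,\sigma^t_x] = \mathcal V(\pi)$ by definition, and the played-point regret bound ultimately only needs $\mathbb E_{x\sim\pi^t}[R^* - R_x]$, which by the same UCB sandwich is at most $\mathbb E[R^*] - \mathcal V(\pi^t) + (\text{UCB slack})$. Since $\mathbb E[R^*] - \mathcal V(\tilde\pi^t)$ is exactly what the exact-\vbos{} part of the analysis bounds, the residual is precisely $\mathcal V(\tilde\pi^t) - \mathcal V(\pi^t) = D_{\sigma^t}(\pi^t,\tilde\pi^t)$, which telescopes into the stated $\mathbb E\sum_t D_{\sigma^t}(\pi^t,\tilde\pi^t)$. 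The delicate point is that the optimism/consistency argument for the first summand was phrased for $x^t\sim\tilde\pi^t$; re-deriving it for $x^t\sim\pi^t$ forces one to pay exactly the $\mathcal V$-gap and nothing more, and making that accounting tight (rather than losing, say, a total-variation factor) is the heart of the proof.

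Finally I would assemble the pieces: take the total expectation over the history, apply the information-gain bound and Cauchy--Schwarz to the exploration term to get $\sqrt{C_{\sigma_n} H T \gamma^T}$, bound $H\le\ln|X|$ trivially, collect the telescoped Bregman term, and check that all the $O(1)$ or $o(\sqrt T)$ remainders from the high-probability UCB events are dominated. I would also verify the two footnote extensions in passing: heteroscedastic noise only enters through $\sigma_n$ in $C_{\sigma_n}$ and the information-gain inequality, so replacing it by $\max_x\sigma_n(x)$ is immediate; and the reduction to $K_{x,x}\le 1$ is just a rescaling of amplitude. The only genuine mathematical content beyond Srinivas-style analysis is the Bregman-divergence bookkeeping described above; everything else is adaptation of known arguments.
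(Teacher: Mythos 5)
Your handling of the second summand and of the policy-suboptimality term is essentially the paper's: the identity $\mathcal V(\pi^t)=\mathbb E_{x\sim\pi^t}[\mu_x]+\mathbb E_{x\sim\pi^t}[\sigma_x\sqrt{-2\ln\pi^t_x}]$, the Bregman identity $\mathcal V(\tilde\pi^t)-\mathcal V(\pi^t)=D_{\sigma^t}(\pi^t,\tilde\pi^t)$ from Proposition~\ref{prop:bregman_divergence}, and Cauchy--Schwarz combined with the information-gain bound on $\sum_t\mathbb E[\sigma_{x^t}^2]$ all appear in the paper's argument in exactly the role you assign them. The gap is in your first summand. You propose the classical Thompson-sampling exchange $\mathbb E[R^*-\hat r^t_{x^t}\mid\mathcal H^t]=\mathbb E[\hat r^t_{x^*}-\hat r^t_{x^t}\mid\mathcal H^t]$, which requires $x^t$ and the true maximizer $x^*$ to be identically distributed given $\mathcal H^t$. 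Under \vbos{} they are not: even in the exact case $x^t\sim\tilde\pi^t$, and $\tilde\pi^t$ is only an approximation of the posterior probability of maximality $p^*$, so the exchange fails and controlling the discrepancy between $\tilde\pi^t$ and $p^*$ is a separate problem that the theorem never needs to solve. In addition, the high-probability UCB overshoot bookkeeping you sketch would leave additive slack (union bounds over $|X|$, confidence-level constants) that is absent from the stated bound, which is an exact inequality with no remainder terms.

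The paper's route avoids both issues with a single exact expectation inequality: $\mathbb E[R^*\mid\mathcal H^t]\le\mathcal V(p^*\mid\mathcal H^t)\le\max_{p}\mathcal V(p\mid\mathcal H^t)=\mathcal V(\tilde\pi^t\mid\mathcal H^t)$, which is Corollary~\ref{cor:gaussian_expected_maximum_upper_bound}. It follows from writing $\mathbb E[R^*]=\sum_x p^*_x\,\mathbb E[R_x\mid R_x=R^*]$ and bounding each conditional expectation by $\mu_x+\sigma_x\sqrt{-2\ln p^*_x}$ via the Donsker--Varadhan representation of the KL divergence together with the Gaussian Cram\'er rate function (Lemmas~\ref{lem:variational_form_of_KL_divergence}--\ref{lem:upper_bound_conditional_expectation} and Corollary~\ref{cor:gaussian_upper_bound_conditional_expectation}). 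No distribution matching, no high-probability events, no residual constants. If you replace your first summand with this inequality, the remainder of your outline assembles into the paper's proof essentially unchanged.
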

\noindent\textbf{For exact VBOS}, we have improved the cumulative regret bound from $\textstyle \tilde{ \mathcal O}(\sqrt{T |X|})$ to $\textstyle \tilde {\mathcal O}(\sqrt{T \gamma^T})$. Whereas the former is vacuous in combinatorially large discrete domains $X$, the latter is not. E.g., with a linear kernel in $d$ dimensions, we have $\gamma^T \in \mathcal O(d \log T)$, yielding a regret bound that scales gracefully with the problem size \citep{srinivas2009gaussian}.

\noindent\textbf{For inexact VBOS}, we provide the first regret bound. It depends on a Bregman divergence between the exact solver $\tilde \pi^t$ of \vbos{}, and the sampling policy $\pi^t$. As Proposition~\ref{prop:bregman_divergence} establishes, this Bregman divergence directly captures to what extent $\pi^t$ maximizes the variational objective $\mathcal V$.

\begin{wrapfigure}{R}{0.6\linewidth}
    \centering
    \vspace{-10pt}
    \includegraphics[width=\linewidth]{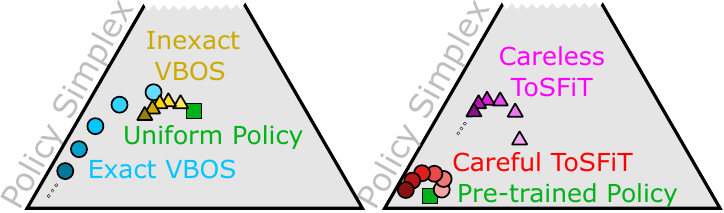}
    \caption{Left: Inexact gradient-based \vbos{} struggles to find the best arm in very high-dimensional probability simplices. Right: \gptsft{} starts in the right region of the simplex. Still, careful adaptation towards \vbos{} is paramount to retain the prior knowledge necessary to find the best arm.}\label{fig:theory}
\end{wrapfigure}

\subsection{Insights: Policy Initialization via Pre-Training \& Context}
The approximation error $D_{\sigma^t}(\pi^t, \tilde \pi^{t})$ risks dominating the regret bound, since gradient-based optimization of the \vbos{} objective $\mathcal V$ yields a policy $\pi^t$ that lags behind the exact \vbos{} policy $\tilde \pi^t$. To mitigate this source of cumulative regret, $\pi^t$ must be initialized and maintained within a high-probability neighborhood of $\tilde \pi^t$, especially for large $t$, when $\tilde \pi^t$ tends to concentrate near the pre-trained policy. To support this, \gptsft{} initializes $\pi^t$ via pre-training and prompt-conditioning, and adapts it cautiously toward the posterior PoM $\tilde \pi^t$ using small learning rates. These insights are illustrated in Figure~\ref{fig:theory} for a three-armed bandit and experimentally verified in Section~\ref{sec:verify_tosfit}. Note that Theorem~\ref{thm:regret_bound_of_approximate_thompson_sampling} holds even if the pre-training prior $\pi^0$ is biased toward suboptimal regions. In that case, more compute effort in the form of gradient steps are required to ensure a vanishing $D_{\sigma^t}(\pi^t, \tilde \pi^t)$.
\section{Experiments}\label{sec:experiments}
We empirically evaluate \gptsft{} across three diverse domains \textcolor{rebuttal}{for language models between $0.6$B and $8$B parameters. We report the mean and standard error of the \textit{best-seen reward} using $25$ random seeds, defined as the maximum reward observed up to step $t$, i.e., $\max_{t\leq n} r_{x^t}$. This metric captures the best solution found so far and is often used in black-box optimization benchmarks. Details are in Appendix~\ref{sec:experiment_details}. Within a collection of methods covering Bayesian optimization, reinforcement learning, and evolutionary search, \gptsft{} exhibits state-of-the-art sample efficiency and computational efficiency. \gptsft{} can naturally be applied in batched settings, and sample efficiency can be further improved by adjusting the computational effort per round of Bayesian optimization}.

\subsection{Settings}

\paragraph{FAQ Refinement}
is a natural language task that tests the algorithm's ability to optimize text based on semantic alignment. We ask a Qwen3-1.7B\textcolor{rebuttal}{/Qwen3-8B model} \citep{yang2025qwen3technicalreport} to write a frequently-asked-questions (FAQ) response. The reward is modeled as the alignment to an unknown ground-truth, judged by the Qwen3-Embedding-0.6B model \citep{qwen3embedding}. As a deep kernel, we adopt a linear kernel over the first 256 entries of the embeddings of Qwen3-Embedding-0.6B. The search space consists of all token sequences, which is exponentially large in the response length.

\paragraph{Protein Search} explores the challenge of designing thermally stable proteins---a task with significant implications for drug development and industrial biotechnology. The goal is to identify amino acid sequences that exhibit high thermal stability, a property that enhances protein robustness and shelf life. Note that with 20 standard amino acids in the human body and sequence lengths of $100$ and above, the search space exceeds the number of atoms in the observable universe. We sample amino acid sequences from ProtGPT2 \citep{ferruz2022protgpt2} \textcolor{rebuttal}{(0.738B parameters)} and score them according to their negative thermal instability index \citep{guruprasad1990correlation}. \textcolor{rebuttal}{Two baselines, \fibo{} and \esllm{}, require instruction-tuned models. There, we use a Qwen3-0.6B/Qwen3-8B model}. To predict the thermal stability and assess epistemic uncertainty, we adopt a GP with linear kernel over the mean token embeddings from ProtGPT2 projected to the unit sphere. \looseness -1

\paragraph{Quantum Circuit Design}
is the task of designing quantum circuits that prepare low-energy quantum states in unknown environments. The challenge lies in navigating a vast, discrete space of valid quantum programs, where entanglement and gate structure critically influence performance. To generate Qiskit circuits \citep{javadiabhari2024quantumcomputingqiskit}, we use a Qwen2.5-Coder-1.5B\textcolor{rebuttal}{/Qwen2.5-Coder-7B} model \citep{hui2024qwen25codertechnicalreport} prompted to fill-in-the-middle \citep{bavarian2022efficienttraininglanguagemodels} after initializing six disentangled qubits in the zero state. \textcolor{rebuttal}{The baselines \fibo{} and \esllm{} require instruction-tuned models, for which we use Qwen2.5-Coder-1.5B-Instruct/Qwen2.5-Coder-7B-Instruct.} As reward, we consider the negative energy of the prepared state under an unknown Hamiltonian with strong interaction terms, requiring entanglement for optimal performance. The feature map for the linear Gaussian process over rewards consists of a code validity bit as well as all two-qubit Pauli observables, which can be efficiently simulated using quantum computers.

\begin{figure}[b!]
    \centering
     \captionsetup[subfigure]{justification=centering, singlelinecheck=false}
    \includegraphics[width=\linewidth]{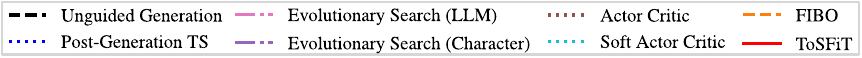}
    \begin{subfigure}{0.3549\linewidth}
        \centering
        \includegraphics[width=\linewidth]{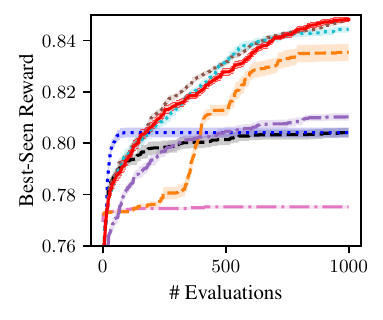}
        \subcaption{\textcolor{rebuttal}{FAQ Response Refinement\\\scriptsize using Qwen3-1.7B}}
    \end{subfigure}\hfill
    \begin{subfigure}{0.3313\linewidth}
        \centering 
        \includegraphics[width=\linewidth]{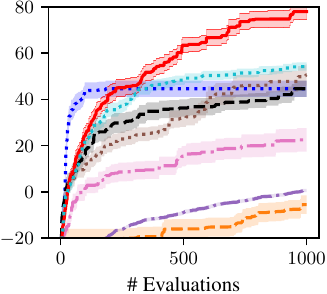}
        \subcaption{\textcolor{rebuttal}{Protein Search\\\scriptsize using ProtGPT2-0.7B or Qwen3-0.6B}}
    \end{subfigure}
    \begin{subfigure}{0.3078\linewidth}
        \centering
        \includegraphics[width=\linewidth]{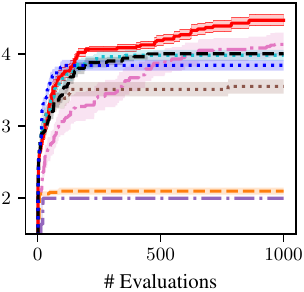}
        \subcaption{\textcolor{rebuttal}{Quantum Circuit Design\\\scriptsize using Qwen2.5-Coder-1.5B(-Instruct)}}
    \end{subfigure}
    \begin{subfigure}{0.3549\linewidth}
        \centering
        \includegraphics[width=\linewidth]{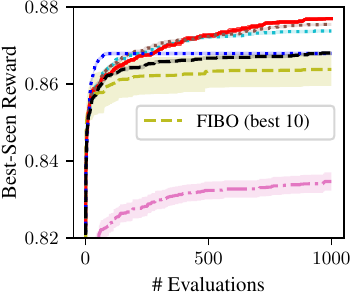}
        \subcaption{\textcolor{rebuttal}{FAQ Response Refinement\\\scriptsize using Qwen3-8B}}
        \label{fig:faq_simple_reard_large}
    \end{subfigure}\hfill
    \begin{subfigure}{0.3313\linewidth}
        \centering 
        \includegraphics[width=\linewidth]{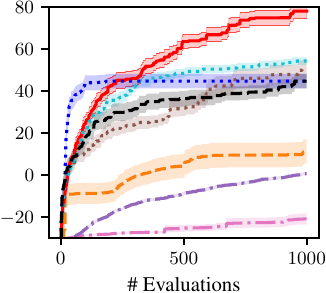}
        \subcaption{\textcolor{rebuttal}{Protein Search\\\scriptsize using ProtGPT2-0.7B or Qwen3-8B}}
    \end{subfigure}
    \begin{subfigure}{0.3078\linewidth}
        \centering
        \includegraphics[width=\linewidth]{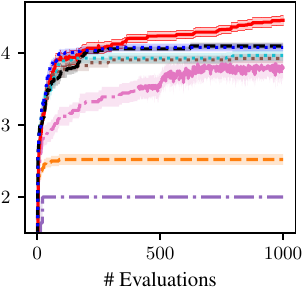}
        \subcaption{\textcolor{rebuttal}{Quantum Circuit Design\\\scriptsize using Qwen2.5-Coder-7B(-Instruct)}}
    \end{subfigure}
    \caption{\textcolor{rebuttal}{Across three tasks and within a collection of methods covering Bayesian optimization, reinforcement learning, as well as evolutionary search, \gptsft{} exhibits state-of-the-art performance. In (b) and (e) \esllm{} and \fibo{} use Qwen for instruction following.}}
    \label{fig:gp_tsft_unlocks_better_solutions}
\end{figure}

\subsection{Baselines}
We compare against \textcolor{rebuttal}{seven} baselines. For fair comparison, the reward model (if present) is always a Gaussian process over the same (deep) embeddings. As in \citeauthor{li2022competition} \citeyearpar{li2022competition}, \ug{} samples directly from the pre-trained LLM without feedback, serving as a non-adaptive baseline. As in \citeauthor{kristiadi2024sober} \citeyearpar{kristiadi2024sober} and \citeauthor{rankovic2025gollum} \citeyearpar{rankovic2025gollum}, \pg{} performs Thompson sampling over a fixed subset of candidates (of size 1000), here generated by the LLM prior to optimization. \textcolor{rebuttal}{\ac{} \citep{barto1983neuronlike} maximizes $\mathbb E_{x \sim \pi^\theta}[\mu_x]$ rather than $\mathcal V$, and \sac{} \citep{haarnoja2018soft} additionally adds entropy regularization, i.e., maximizes $\mathbb E_{x \sim \pi^\theta}[\mu_x - \alpha \ln \pi^\theta_x]$. \eschar{}~\citep{holland1975adaptation} treats solutions as sequences of text characters and applies biology inspired crossover and mutation operators. \esllm{}~\citep{romera2024mathematical} prompts a language model to conduct crossover and mutation of solutions. Finally, \fibo{}~\citep{de2025simplifying} performs Thompson sampling fully in-context of an LLM by attending to a list of evaluated candidate solutions.}

\subsection{\gptsft{} \textcolor{rebuttal}{Obtains State-of-the-Art Sample Efficiency}}
\begin{wrapfigure}{R}{0.346\linewidth}
    \centering
    \vspace{-12pt}
    \includegraphics[width=\linewidth]{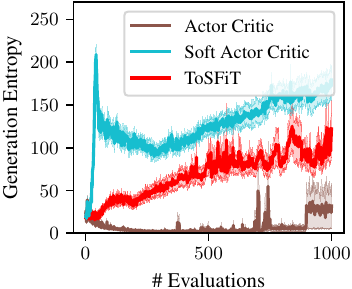}
    \caption{\textcolor{rebuttal}{\gptsft{} balances exploration with exploitation leading to stable policy diversity. Shown on Quantum Circuit Design, for other settings, see Section~\ref{sec:generation_diversity}.}}
    \label{fig:diversity}
    \vspace{-10pt}
\end{wrapfigure}
Consider Figure~\ref{fig:gp_tsft_unlocks_better_solutions}. In all three tasks---FAQ Response Refinement, Protein Search, and Quantum Circuit Design---\textcolor{rebuttal}{\gptsft{} obtains state-of-the-art sample efficiency. Indeed,} the best-seen reward of \ug{} quickly saturates at a suboptimal level. Classical Bayesian optimization over a fixed subset of generated candidates (\pg{}) identifies a good solution more efficiently, but remains confined to the initial sample pool. \textcolor{rebuttal}{\es{} as well as \acs{} conduct undirected exploration without \textit{optimism in the face of uncertainty}, i.e., an exploration bias guided by the reward potential of unexplored regions. As confirmed in Figure~\ref{fig:diversity}, the optimism of \gptsft{} results in a more stable exploration-exploitation tradeoff than that of \acs{}. \fibo{} is closest to \gptsft{} in spirit, but relies on in-context learning instead of parameter updates. As can be seen from the result, this leads to significantly lower performance than \gptsft{}. As \fibo{} keeps the generated candidate solutions as generation context, it quickly runs out of memory and compute scales quadratically with the number of rounds, forcing us to compare against a top-10 truncated version of \fibo{} in Figure~\ref{fig:faq_simple_reard_large}.}
\begin{table}[ht]
    \centering
    \caption{\textcolor{rebuttal}{Among all LLM-based methods considered for optimization, only \gptsft{} explores using optimism, conditions model generation for improved efficiency, and does not require the LLM to execute complex procedures via instruction following which sets requirements on the model size.}}
    \label{tab:methods}
    \begin{tabular}{r|>{\centering\arraybackslash}m{42pt}|>{\centering\arraybackslash}m{41pt}|>{\centering\arraybackslash}m{41pt}|>{\centering\arraybackslash}m{45pt}|>{\centering\arraybackslash}m{44pt}}
        & \textcolor{rebuttal}{Exploits Promising Regions} & \textcolor{rebuttal}{Explores using Entropy} & \textcolor{rebuttal}{Explores using Optimism} & \textcolor{rebuttal}{Conditions Model Generation} & \textcolor{rebuttal}{Instruction Following Agnostic} \\ \hline
         \textcolor{rebuttal}{\ug{}} & \textcolor{Red}{no} & \textcolor{Red}{no} & \textcolor{Red}{no} & \textcolor{Red}{no} & \textcolor{Green}{yes} \\
         \textcolor{rebuttal}{\pg{}} & \textcolor{Green}{yes} & \textcolor{Green}{yes} & \textcolor{Green}{yes} & \textcolor{Red}{no} & \textcolor{Green}{yes} \\
         \textcolor{rebuttal}{\es{}} & \textcolor{Green}{yes} & \textcolor{Red}{no} & \textcolor{Red}{no}  & \textcolor{Green}{yes} & \textcolor{Red}{no}\\
         \textcolor{rebuttal}{\ac{}} & \textcolor{Green}{yes} & \textcolor{Red}{no} & \textcolor{Red}{no} & \textcolor{Green}{yes} & \textcolor{Green}{yes}\\
         \textcolor{rebuttal}{\sac{}} & \textcolor{Green}{yes} & \textcolor{Green}{yes} & \textcolor{Red}{no} & \textcolor{Green}{yes} & \textcolor{Green}{yes}\\
         \textcolor{rebuttal}{\fibo{}} & \textcolor{Green}{yes} & \textcolor{Green}{yes} & \textcolor{Green}{yes} & \textcolor{Green}{yes} & \textcolor{Red}{no}\\
         \textcolor{rebuttal}{\gptsft{}} & \textcolor{Green}{yes} & \textcolor{Green}{yes} & \textcolor{Green}{yes} & \textcolor{Green}{yes} & \textcolor{Green}{yes}
    \end{tabular}
\end{table}
\vspace{-20pt}

\textcolor{rebuttal}{\paragraph{Scaling to Larger Models} Contrast the top row of experiments in Figure~\ref{fig:gp_tsft_unlocks_better_solutions} with the bottom row (in Protein Search ProtGPT-2 is used twice for \gptsft{} as well as all baselines without instruction following due to the absence of larger variants thereof). While increasing the model size is highly beneficial to FAQ Response Refinement, a natural language task, the tasks of Protein Search and Quantum Circuit Design profit far less. This matches the observations by \citeauthor{romera2024mathematical} \citeyearpar{romera2024mathematical} for code discovery. Indeed, novel discovery often requires leaving the training data manifold, i.e., the generative policy does not have to be tightly concentrated on the most promising solutions.}

\subsection{\gptsft{} \textcolor{rebuttal}{Remains Effective in} Batched Bayesian Optimization}
\begin{figure}[t]
    \centering
    \includegraphics[width=0.49\linewidth]{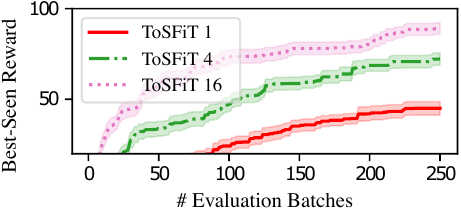}\hfill\includegraphics[width=0.49\linewidth]{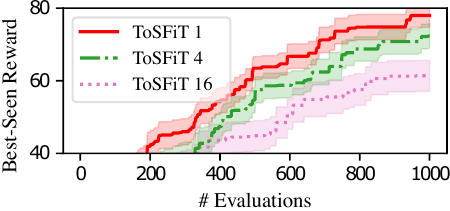}
    \caption{Protein Search with batched \gptsft{}. The number to the right indicates the batch size $b$. Larger batches improve the iteration efficiency (left) with slightly lower sample efficiency (right).}
    \label{fig:protein_batched}
\end{figure}
So far, we have focused on sequential Bayesian optimization, setting $b = 1$ in Algorithm~\ref{alg:main}. However, when observations are delayed or time-consuming, batched Bayesian optimization becomes preferable. Figure~\ref{fig:protein_batched} considers a setting where up to $16$ proteins can be synthesized and tested in parallel. While batching reduces sample efficiency---requiring more evaluations to reach a given reward---it improves iteration efficiency, achieving target performance in fewer rounds.
\textcolor{rebuttal}{\subsection{\gptsft{} Trades Off Computational Efficiency with Sample Efficiency}}
\begin{wrapfigure}{R}{0.7\linewidth}
\vspace{-14pt}
\begin{subfigure}[t]{0.49\linewidth}
    \centering
    \includegraphics[width=\linewidth]{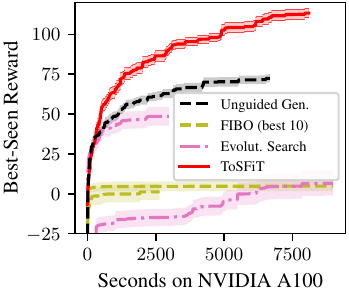}
    \caption{\textcolor{rebuttal}{Latency on fully batched ($B=b=16$) Protein Search. \fibo{} and \esllm{} are each tested with Qwen3-0.6B (fast) and Qwen3-8B (slow).}}
    \label{fig:computational_complexity_vs_simple_reward}
\end{subfigure}\hfill
\begin{subfigure}[t]{0.48\linewidth}
    \centering
    \includegraphics[width=\linewidth]{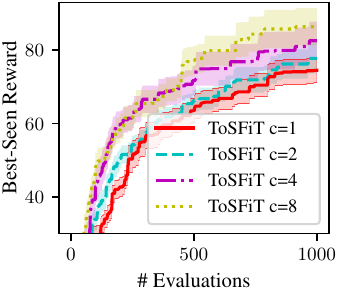}
    \caption{\textcolor{rebuttal}{The sample efficiency of \gptsft{} on Protein Search is improved by increasing the number of steps of gradient ascent per round (denoted by $c$).}}
    \label{fig:computational_complexity_vs_sample_complexity}
\end{subfigure}
\caption{\textcolor{rebuttal}{\gptsft{} has state-of-the-art computational efficiency (a) and can trade off computational efficiency with sample efficiency (b).}}
\label{fig:computational_complexity}
\end{wrapfigure}
\textcolor{rebuttal}{Consider Figure~\ref{fig:computational_complexity_vs_simple_reward}, which explores the computational efficiency of \gptsft{}. We do not plot the \acs{} and \pg{}, as their computational cost matches, respectively, \gptsft{} and \ug{} with strictly worse reward. For $10'000$ rounds \fibo{} runs out of memory in its standard implementation, thus we once again implement \fibo{} attending only to the top-10 candidate solutions and their scores. This also decreases its compute costs drastically at the cost of limiting its sample efficiency. Note that the strong sample efficiency of \gptsft{} compensates for the additional compute cost introduced by model fine-tuning, resulting in state-of-the-art computational efficiency. Next, consider Figure~\ref{fig:computational_complexity_vs_sample_complexity}. By investing additional compute through multiple steps of gradient ascent per round, the sample efficiency of \gptsft{} can be further improved if needed.}

\subsection{\textcolor{rebuttal}{\gptsft{} Benefits from Strong Priors and Requires Careful Fine-Tuning}}\label{sec:verify_tosfit}
\textcolor{rebuttal}{Lastly, we validate our theoretical insights.} Theorem~\ref{thm:regret_bound_of_approximate_thompson_sampling} highlights the importance of initializing the sampling policy with strong prior knowledge. Figure~\ref{fig:quantum_circuit_importance_of_prior} provides empirical support: masking the number of qubits in the prompt (\gptsft{} weak context) leads to a significantly worse reward than using the full prior context (\gptsft{}). Note that uniform initialization fails to produce valid code. The theorem further suggests cautious adaptation to avoid drifting too far from the initialization. As shown in Figure~\ref{fig:faq_learning_rate}, a large learning rate may initially improve performance by closely following \vbos{}, but eventually leads to forgetting the prior and stagnating as optimization becomes harder.
\begin{figure}[ht]
    \centering
    \begin{minipage}{0.475\linewidth}
        \centering
        \includegraphics[width=\linewidth]{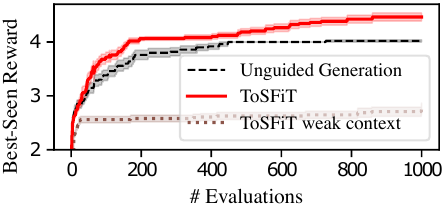}
        \captionof{figure}{\gptsft{} benefits markedly from a strong initial policy. On Quantum Circuit Design, using the most informative initial policy yields much better performance.}
        \label{fig:quantum_circuit_importance_of_prior}
    \end{minipage}\hfill
    \begin{minipage}{0.495\linewidth}
        \centering
        \includegraphics[width=\linewidth]{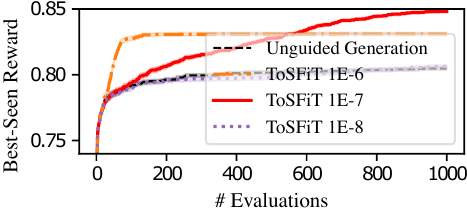}
        \captionof{figure}{Careful adaptation to \vbos{} is essential to retain prior knowledge, here demonstrated for FAQ Refinement. The learning rate ($\eta$) is indicated to the right.}
        \label{fig:faq_learning_rate}
    \end{minipage}
\end{figure}
\section{Related Work}\label{sec:related_work}
As a remedy for the intractable maximization of acquisition functions over combinatorially large discrete domains, \citeauthor{bal2024optimistic} \citeyearpar{bal2024optimistic} assume a factorization into the cartesian product of much smaller domains. They then identify local maxima as configurations where changing any of the factors individually decreases the acquisition function, and query the black-box function with the highest scoring game equilibrium. In contrast to \gptsft{}, the domain must be structured to admit a factorization into cartesian products, and their optimization strategy does not leverage priors from LLMs.

In a similar vein, \citeauthor{swersky2020amortized} \citeyearpar{swersky2020amortized} propose maximization of acquisition functions over discrete spaces through local mutations of a candidate solution. To conduct directed mutations, they learn a policy. In contrast to \gptsft{}, their policy must be repeatedly applied to maximize the acquisition function, which quickly becomes expensive by itself, and shares similar limitations to \citeauthor{bal2024optimistic} \citeyearpar{bal2024optimistic}, such as the need for structured domains and lack of integration with pre-trained models.

There is extensive prior work on relaxing Bayesian optimization over discrete space to continuous space using variational autoencoders \citep{kusner2017grammar, lu2018structured, griffiths2020constrained, notin2021, leelatent2025}. Acquisition function maximization is then tractably performed via gradient ascent in the continuous latent space. However, these methods require a task-specific pre-training phase for the autoencoder, whereas \gptsft{} can be applied zero-shot through prompt-conditioning, thus benefiting from large-scale industrial LLM pretraining.

Finally, \citeauthor{rankovic2025gollum} \citeyearpar{rankovic2025gollum} learn deep kernel features for Bayesian optimization with Gaussian processes. By treating the neural feature map as a hyperparameter, they use gradient ascent to maximize the marginal likelihood of observations during Bayesian optimization in a fully online fashion. Experimentally, they find much better optimization trajectories compared to fixed feature maps. That said, they do not address the intractability of acquisition function maximization. Their method is complementary to \gptsft{} and could be integrated to learn task-specific embeddings.
\section{Conclusion}\label{sec:conclusion}
We demonstrate that Thompson sampling can be efficiently scaled to large unstructured discrete domains by parameterizing the probability of maximality with a generative policy. Our theoretical analysis reveals that \vbos{} already leverages the kernel structure of Gaussian processes and achieves the strong regret bound of standard Thompson sampling and GP-UCB. Extending these results to approximate \vbos{} highlights the importance of initializing the sampling policy according to strong priors from pre-training and adapting it cautiously to maintain prior knowledge. These insights are supported by empirical results across three diverse tasks. \textcolor{rebuttal}{Within a collection of methods covering Bayesian optimization, reinforcement learning, as well as evolutionary search, and to statistical significance, \gptsft{} exhibits simultaneously state-of-the-art sample efficiency and computational efficiency.} Together, our findings demonstrate the potential of combining foundation models with principled Bayesian optimization to tackle complex, discrete search problems.

\subsection{Limitations and Future Work}
To ensure controlled comparisons, we evaluate \gptsft{} under fixed feature maps---either derived from pre-trained embeddings or manually designed. A promising direction for future work is to learn deep, task-adaptive embeddings jointly with the Gaussian process, as in \citeauthor{rankovic2025gollum} \citeyearpar{rankovic2025gollum}, or replace the GP with more expressive reward models such as Bayesian neural networks or ensembles \citep{lakshminarayanan2017simple}. Moreover, to reduce the computational and memory overhead introduced by fine-tuning, one could restrict updates to the last few layers of the generator. Finally, alternative strategies for optimizing the \vbos{} objective $\mathcal V(\pi)$---such as in-context conditioning---may offer a lightweight alternative to weight adaptation.

\section*{Ethics Statement}
This work does not involve human subjects, personally identifiable information, or sensitive data. All experiments were conducted using publicly available models. The proposed method, \gptsft{}, is intended for research in optimization over discrete domains and is evaluated in controlled settings. We emphasize that \gptsft{} is not intended for deployment in high-stakes decision-making without appropriate human oversight and domain-specific validation. As \gptsft{} avoids the costly maximization of acquisition functions, environmental impact was a central consideration. No known conflicts of interest are associated with this work.

\section*{Reproducibility Statement}
Apart from the complete experimental code base provided publicly at \url{https://github.com/IBM/thompson-sampling-via-fine-tuning-of-llms}, pseudocode for \gptsft{} is given in Algorithm~\ref{alg:main}. The baselines are also implemented in the code in a self-contained manner. The experimental environments for the three tasks considered is also provided in the code. Moreover, the provided README.md gives a step-by-step tutorial to reproduce all results in the paper. The hardware and software for running the experiments is described in detail in Section~\ref{sec:experiment_details} of the Appendix. Furthermore, Section~\ref{sec:hyperparameters} details the hyperparameter sweep conducted during development of this paper. In the evaluation, statistical significance is indicated via mean and standard error under $25$ seeds, set to $0, \ldots, 24$. All theoretical statements are self-contained and include the complete set of assumptions. Both detailed proofs as well as proof sketches for the main results are provided in Section~\ref{sec:proofs} of the Appendix. We distinguish between novel claims and prior statements by citing the appropriate works.

\section*{Acknowledgment}
This work is supported by the Swiss National Science Foundation (SNSF), grant 10002666.

\bibliography{iclr2026_conference}
\bibliographystyle{iclr2026_conference}
\newpage
\appendix
\section*{Appendix}
\section{Additional Experimental Results}

\subsection{Computational Cost of \gptsft{}}\label{sec:computational_cost}
Bayesian optimization is designed to trade the number of evaluations of an expensive reward function against the computationally demanding computation of the reward posterior. In this spirit, we emphasize sample efficiency as the main consideration. Nevertheless, as measured in wall-clock time in Table~\ref{tab:wall_clock_overhead}, \textcolor{rebuttal}{model fine-tuning only incurs a minor latency overhead (here $19\%$). As a result, despite more cost per round \gptsft{} reaches higher reward scores at a fixed computational budget, i.e., it is computationally more efficient than all baselines (see Figure~\ref{fig:computational_complexity_vs_simple_reward}).}

\begin{table}[h]
    \caption{Wall-clock time per step of fully-batched \gptsft{} ($b=B=16$) on an A100 GPU for Protein Search.
    The majority of the runtime is spent on candidate generation.}
    \centering
    \begin{tabular}{c|c|c|c}
         Autoregressive Generation & Model Fine-Tuning & GP Update & \gptsft{}  \\
         \hline
         $1.08 \pm 0.05$ s & $0.21 \pm 0.01$ s & $0.33 \pm 0.06$ ms & $1.29 \pm 0.05$ s
    \end{tabular}
    \label{tab:wall_clock_overhead}
\end{table}

\noindent Note that in the fully-batched setting \textcolor{rebuttal}{considered here} all generations are both used to estimate \vbos{} gradients and to provide observational feedback to the reward model. The 19\% overhead observed in Table~\ref{tab:wall_clock_overhead} stems from the additional non-auto-regressive backward pass during model fine-tuning. This computational overhead results in fewer generated candidates at a fixed budget.

\textcolor{rebuttal}{
\subsection{Generation Diversity}\label{sec:generation_diversity}
A common failure point of reward-based fine-tuning is diversity collapse of the policy. As can be observed in Figure~\ref{fig:entropy_complete}, \ac{}~\citep{barto1983neuronlike} is strongly at risk. In contrast, \sac{}~\citep{haarnoja2018soft} retains diversity by directly regularizing the loss with an additional entropy term. Similarly, the objective of \gptsft{} contains the additional term $\sqrt{-2 \ln \pi_x^\theta} \cdot \sigma_x$, which can be understood as entropy regularization targeted toward underexplored regions where $\sigma_x$ is large. As shown in Figure~\ref{fig:entropy_complete}, this results in highly robust diversity trajectories. Moreover, in contrast to constant entropy regularization, such an uncertainty-aware exploration-exploitation tradeoff increases diversity in case of large uncertainty, and decreases diversity given sufficient observations.
\begin{figure}[ht]
    \centering
    \includegraphics[width=\linewidth]{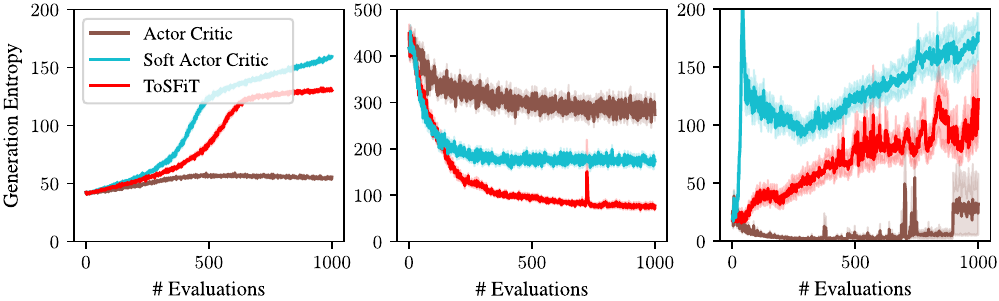}
    \begin{subfigure}{0.356\linewidth}
        \centering
        \subcaption{FAQ Response Refinement}
    \end{subfigure}\hfill
    \begin{subfigure}{0.314\linewidth}
        \centering 
        \subcaption{Protein Search}
    \end{subfigure}
    \begin{subfigure}{0.32\linewidth}
        \centering
        \subcaption{Quantum Circuit Design}\label{subfig:diversity_collapse_apdx}
\end{subfigure}\caption{\textcolor{rebuttal}{\gptsft{} directs exploration by incorporating uncertainty-aware entropy regularization via $\mathbb E_{x\sim \pi^\theta} [\sqrt{-2 \ln \pi_x^\theta} \cdot \sigma_x]$. It avoids the diversity collapse of $\ac{}$ and is more stable than \sac{} (see Subfigure~\ref{subfig:diversity_collapse_apdx}). Moreover, by steering generation diversity with the reward uncertainty $\sigma_x$, \gptsft{} enables both phases of exploration and phases of exploitation. Note that the behavior of each method yields distinct observation histories on top of loss discrepancies.}}
    \label{fig:entropy_complete}
\end{figure}
}
\subsection{\textcolor{rebuttal}{Loss and Risk of Overfitting}}\label{sec:bo_loss}
\begin{wrapfigure}{R}
{0.4\linewidth}
\vspace{-16pt}
\includegraphics[width=\linewidth]{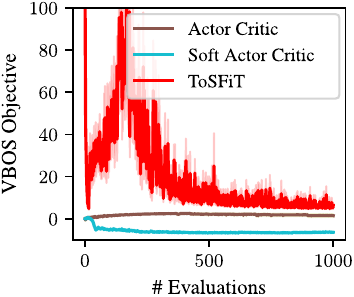}
\caption{\textcolor{rebuttal}{Evolution of the \vbos{} objective during Quantum Circuit Design using a Qwen2.5-Coder-1.5B.}}
\label{fig:vbos_evolution}
\end{wrapfigure}
\textcolor{rebuttal}{The loss function in \gptsft{} is set to the negative \vbos{} objective, i.e., is a lower bound on $-\max_{x\in\mathcal X} R_x$. In contrast to training on a fixed dataset, the loss function evolves during optimization as more data is observed. Thus, the optimization dynamics typically do not lead to a monotonous increase in the objective, see Figure~\ref{fig:vbos_evolution}. Such non-stationarity leads to incorrect gradient statistics for optimizers like Adam~\citep{kingma2017adam}. Thus, we adopt vanilla stochastic gradient descent. Moreover, in contrast to a fixed dataset, there is no risk of overfitting the policy: the optimal policy is the one that maximizes the \vbos{} objective, as proven in Theorem~\ref{thm:regret_bound_of_approximate_thompson_sampling}. Indeed, both entropy regularization and uncertainty in the reward function are directly integrated into the loss. Overfitting of the reward surrogate $\mu_x \pm \sigma_x$ may still occur, but can be mitigated by adopting ensembles or Gaussian processes.}

\section{Experimental Details}\label{sec:experiment_details}
To ensure reproducibility, we provide the setup for each of the experiments in Section~\ref{sec:experiments}. 
All experiments were conducted on compute nodes with an AMD EPYC 7763 64-Core CPU, 2TB RAM, and an NVIDIA A100-SXM4 GPU (80GB), running on Red Hat Enterprise Linux 9.4 with CUDA 12.4. Our Pytorch-based code \citep{paszke2019pytorchimperativestylehighperformance} is provided at \url{https://github.com/IBM/thompson-sampling-via-fine-tuning-of-llms}, including experimental configuration files and an environment file specifying the necessary packages and their versions.

\subsection{Baselines}
For \textbf{\ug{}}, we simply generate and evaluate candidates according to the hyperparameters detailed in Section~\ref{sec:experimental_details_shared_across_experiments}.\\\\
For \textbf{\pg{}}, we first generate a fixed candidate pool using the large language model with the same hyperparameters, where the pool size is set to the number of steps of Bayesian optimization. Then, we conduct standard Thompson sampling on the candidate subset using a Gaussian process surrogate model derived from the (deep) task-dependent embeddings.\\\\
For \textbf{\gptsft{}}, we generate and evaluate candidates according to the same hyperparameters, but additionally follow Algorithm~\ref{alg:main} to perform weight updates for consistency between the policy and the surrogate reward model, the same Gaussian process as in \pg{}.\\\\
\textcolor{rebuttal}{The \textbf{\ac{}} baseline follows \gptsft{}, but maximizes $\mathbb E_{x \sim \pi^\theta}[\mu_x]$ rather than $\mathcal V$.\\\\
\textbf{\sac{}} additionally adds entropy regularization to \ac{}, i.e., maximizes $\mathbb E_{x \sim \pi^\theta}[\mu_x - \alpha \ln \pi^\theta_x]$. The hyperparameter $\alpha$ is swept for each experimental setting. \\\\
For the \textbf{\es{}} baselines, we generate new candidates by selecting two parents in each round independently via tournament selection with tournament size $3$ and applying a crossover and mutation operation. We only let the $10$ fittest members of the population survive. The initial population has one entry that acts as an example for the desired format. For the three tasks considered, the initial entry is set to}
\begin{verbatim}
FAQ: How do I reset my password?
Q: How do I reset my password?
A: To reset your password, follow these steps:
Log in to your account (if you have access to the platform or 
service you're using).
Locate the 'Forgot Password' or 'Reset Password' link, usually
found in the login form or menu.
Enter your email address or username associated with your account.
Follow the instructions sent to your email (or the confirmation 
screen).
Complete the password reset process by entering a new password
and confirming it.
Submit the form to finalize the reset.
If you're unable to locate the "Forgot Password" option, 
contact the support team for assistance.

MINDLLDISRIISGKMTLDRAEVNLTAIARQVVEEQRQAAEAKSIQLLCSTPDTNHYVFG
DFDRLKQTLWNLLSNAVKFTPSGGTVELELGYNAEGMEVYVKDSGIGIDPAFLPYVFDRF
RQSDAADSRNYGGLGLGLAIVKHLLDLHEGNVSAQSEGFGKGATFTVLLPLKPLKRELAA
VNRHTAVQQSAPLNDNLAGMKILIVEDRPDTNEMVSYILEEAGAIVETAESGAAALTSLK
SYSPDLVLSDIGMPMMDGYEMIEYIREWKTTKGG

qc.cx(0, 1)
\end{verbatim}
\textcolor{rebuttal}{
\textbf{\eschar{}} applies crossover by cutting each parent in half at a uniformly chosen location and stitching the pieces together. Mutation is performed through random substitution (at rate $0.05$ per position) of the characters of the candidate solution with a letter from the vocabulary ACDEFGHIKLMNOPQRSTUVWY for Protein Search and one of the 95 printable ASCII characters for FAQ Response Refinement and Quantum Circuit Design. After substitution, random insertion \& deletion of characters (at rate $0.01$) is performed.\\\\
\textbf{\esllm{}} performs crossover and mutation in context through instruction following of the subsequent system prompt}

\begin{verbatim}
You are conducting evolutionary search in context. You are 
provided two candidate solutions. Propose a new distinct solution
by combining the candidate solutions and mutating the result.
Your novel candidate solution must be enclosed by 
<candidate> </candidate>. Never repeat previous solutions. Your
search space is over FAQ responses to the question "How do I reset
my password?". /no_think

[...] Your search space is over amino acid sequences, with each 
amino acid represented as a single-letter code./no_think

[...] Your search space is over 6-qubit Qiskit quantum circuits,
already prepared with `qc = QuantumCircuit(6)`.
\end{verbatim}

\textcolor{rebuttal}{
Finally, \textbf{\fibo{}} implements Thompson sampling fully in context by attending to a list of evaluated candidate solutions, initially set to the same examples provided to \esllm{}. The algorithm implements Thompson sampling by adopting the following system prompt:}

\begin{verbatim}
You are conducting Bayesian optimization (Thompson sampling) fully
in context. You are provided a list of candidate solutions and the 
rewards achieved by these solutions. Propose a new distinct 
solution that maximizes the reward. Your novel candidate solution
must be enclosed by <candidate> </candidate>. Never repeat
previous solutions. Your search space is over FAQ responses to the 
question "How do I reset my password?". /no_think

[...] Your search space is over amino acid sequences, with each 
amino acid represented as a single-letter code./no_think

[...] Your search space is over 6-qubit Qiskit quantum circuits,
already prepared with `qc = QuantumCircuit(6)`.
\end{verbatim}

\subsection{Experimental Details Shared Across Experiments}\label{sec:experimental_details_shared_across_experiments}

\textcolor{rebuttal}{\gptsft{} requires minimal task-specific hyperparameter tuning. Across all experiments \textcolor{rebuttal}{with a Gaussian process reward model}, a burn-in period of $m=16$ steps is adopted and an exploration bonus of $4.0$ is applied to the prior amplitude inferred via MLM. \textcolor{rebuttal}{The noise-to-amplitude ratio $\sigma_{nar}$ is always set to $0.01$, reflecting the absence of noise in the measurements, but ensuring stable inversion of the observation covariance matrix. Unless stated otherwise}, the number of steps of gradient ascent per step of Bayesian optimization, $c$, is fixed at $1$ to maximize computational efficiency. To ensure stable estimation of \vbos{} gradients, the generation batch size $B$ is always set to $16$ with temperature set to $1.0$. We conduct unbatched Bayesian optimization ($b=1$) unless otherwise specified. Learning rates are the only hyperparameters tuned per task: $1.0 \times 10^{-7}$ for FAQ refinement, $1.0 \times 10^{-5}$ for Protein Search, and $5.0 \times 10^{-6}$ for Quantum Circuit Design. The baseline \sac{} requires an additional entropy regularization coefficient as hyperparameter and is swept across $\{0.01, 0.1, 1.0, 10.0\}$ for each experiment. We report on the best configuration.}

\textcolor{rebuttal}{\paragraph{Implementation Details of \gptsft{}}
To reduce the memory overhead of \gptsft{} compared to \ug{} (and in light of Section~\ref{sec:bo_loss}), we do not adopt parameter-wise optimization statistics \citep{kingma2017adam}. Instead, we use vanilla stochastic gradient descent with a fixed learning rate and without momentum. We do not store the gradients, but instead apply them directly during the backward pass. Also, while we perform fully-batched auto-regressive generation, the forward \& backward passes to compute $\tfrac{d }{d\theta}\ln \pi_\theta(x_i)$ are performed in unbatched fashion with checkpointing. This crucially brings down memory consumption with little impact on runtime. All experiments were run on a single NVIDIA A100 GPU with 80 Gigabytes of VRAM. }

\subsection{Choice of Hyperparameters}\label{sec:hyperparameters}
As mentioned above, \gptsft{} requires minimal task-specific hyperparameter tuning. Indeed, only the learning rate is tuned for each task, based on one-dimensional grid search with search locations $\{5.0 \times 10^{-5}, 1.0 \times 10^{-5}, 5.0 \times 10^{-6}, 1.0 \times 10^{-6}, 5.0 \times 10^{-7}, 2.0 \times 10^{-7}, 1.0 \times 10^{-7}, 5.0 \times 10^{-8}\}$. Burn-in periods of $m\in \{4,8,16\}$ were tried out with $16$ being sufficiently large to ensure stable marginal likelihood maximization, yet small enough to not significantly impact sample efficiency due to delayed optimization. The exploration bonus was sweeped across the set $\{1.0, 2.0, 4.0, 8.0, 16.0\}$ on a toy task. A bonus of $4.0$ provided the best trade-off between exploration and exploitation, and generalized to strong performance on the reported experiments. The number of steps of gradient ascent per step of Bayesian optimization, $c$, was kept to $1$ in order to minimize the computational overhead compared to \ug{}. For the generation batch size $B$, the values $\{4,8,16,32,64\}$ were tried out. $16$ was the smallest size where GPU utilization became acceptably large during auto-regressive generation. However, empirically $B$ did not significantly impact the simple regret trajectories during Bayesian optimization, suggesting that it could possibly be lowered for alternative models and/or hardware. The baseline \textcolor{rebuttal}{\sac{} requires an additional entropy regularization coefficient as hyperparameter and is swept across $\{0.01, 0.1, 1.0, 10.0\}$ for each experimental setup.}

\subsection{FAQ Refinement}\label{sec:faq_details}
For \gptsft{}, the \acs{}, \ug{}, and \pg{}, we use the Qwen3-1.7B/Qwen3-8B model with the system prompt \textit{You are a helpful assistant.} followed by the prompt \textit{Write an FAQ response to the question "How do I reset my password?". /no\_think}. We adopt a learning rate of 1.0E-7/2.0E-7. As kernel features, we extract the first $256$ entries of the Qwen3-Embedding-0.6B model \citep{qwen3embedding} with input set to the concatenation of prompt and response. We normalize the result to the unit sphere and add an additional constant $1$ entry to act as a bias. The reward is modeled as the alignment to the unknown ground-truth response \textit{To reset your password, go to the login page and click the “Forgot Password” link. Enter your registered email address, and we'll send you instructions to create a new password. Make sure to check your spam folder if you don’t see the email within a few minutes.}, computed through the cosine similarity between the full Qwen3-Embedding-0.6B embeddings of the candidate and the ground-truth.

\subsection{Protein Search}\label{sec:protein_details}
For \gptsft{}, the \acs{}, \ug{}, and \pg{}, we use the ProtGPT2 model \citep{ferruz2022protgpt2} with initial token \textit{\textless $\vert$endoftext$\vert$\textgreater} and a learning rate of 1.0E-5. As kernel features, we normalize each $1280$ dimensional token embedding of the generated sequence to the unit sphere, average across the sequence length, normalize the result to the unit sphere, and add an additional constant $1$ entry to act as a bias. The reward is set to the negative thermal instability index \citep{guruprasad1990correlation} implemented in the BioPython library \citep{cock2009biopython}.

\subsection{Quantum Circuit Design}\label{sec:quantum_details}
For \gptsft{}, the \acs{}, \ug{}, and \pg{}, we use the Qwen2.5-Coder-1.5B/Qwen2.5-Coder-7B model \citep{hui2024qwen25codertechnicalreport} prompted with fill-in-the-middle \citep{bavarian2022efficienttraininglanguagemodels} to generate a Qiskit circuit \citep{javadiabhari2024quantumcomputingqiskit}:
\begin{verbatim}
<|fim_prefix|>
def circuit():
    qc = QuantumCircuit(6)
<|fim_suffix|>
    return qc
qc = circuit()
state = Statevector.from_instruction(qc)
<|fim_middle|>}. 
\end{verbatim}
We adopt a learning rate of 5.0E-6/1.0E-5. As kernel features, we take all $210$ two-qubit Pauli observables, which are efficiently simulated by quantum computers, and concatenate an additional code invalidity bit. If the code invalidity bit is set to 1, all other features are set to zero. Finally, we add a constant $1$ entry to act as a bias. The reward is set to the negative energy of the prepared state according to an unknown Hamiltonian. The Hamiltonian is constructed through the following  weighted linear combination of Pauli terms:
\begin{verbatim}
   (ZIIIIII, -1.0)
   (IZIIIII, 1.0)
   (IIXIIII, -1.0)
   (IIIYIII, -1.0)
   (IIIIZZI, -1.0)
   (IIIIXXI, -1.0)
   (IZZIIII, -1.0)
   (YIIIIIX, -0.5)
   (IIZIIIX, -0.5).
\end{verbatim}
Note that the Hamiltonian heavily features superposition-inducing $X$ and $Y$ operators and pair-wise interaction terms, necessitating entanglement to reach the lowest energy state.

\section{Standardized RLOO Is GRPO}\label{sec:equivalence_between_GRPO_and_RLOO}
\gptsft{} crucially relies on the score trick \citep{williams1992simple} for unbiased gradient estimation of the \vbos{} objective (see Proposition~\ref{prop:derivatives_of_vbos}). The generalized score trick states that for $x_i \overset{i.i.d.}{\sim} \pi^\theta$
\begin{equation*}
    \textstyle \frac{d}{d\theta}\mathbb E_{x \sim \pi^\theta}[r(x, \theta)] = \mathbb E_{x \sim \pi^\theta}\big[\big(\hat{r}(x, \theta) + \xi\big) \tfrac{d}{d\theta} \ln \pi^\theta(x) \big] 
    \approx \textstyle \frac{1}{B} \sum_{i} (\hat{r}(x_i,\theta) + \xi_i) \tfrac{d}{d\theta} \ln \pi^\theta (x_i),
\end{equation*}
where $\textstyle \hat{r}(x, \theta) = \tfrac{d}{d\theta}r(x, \theta)/\frac{d}{d\theta} \ln \pi^\theta(x) + r(x, \theta)$. Note that for unbiased estimation the free baseline $\xi \in \mathbb R$ can be chosen differently for each $i \in \{1, \ldots, B\}$ \textit{as long as it does not depend on $x_i$}. One effective baseline that greatly reduces the variance is to set $\xi_i = \tfrac{1}{B-1} \textstyle\sum_{j \not = i} \hat{r}(x_j, \theta)$ \citep{kool2019buy}, resulting in the `Reinforce Leave One Out` (RLOO) advantage function $\hat{r}_j - \tfrac{1}{B-1} \textstyle\sum_{j \not = i} \hat{r}(x_j, \theta)$. As we show in Proposition~\ref{prop:srloo_equals_grpo}, standardization of RLOO recovers the advantage function introduced by \citeauthor{shao2024deepseekmath} \citeyearpar{shao2024deepseekmath} for Group Relative Policy Optimization.

\begin{prop}\label{prop:srloo_equals_grpo}
    Consider the `Reinforce Leave One Out` \citep{kool2019buy} advantage function given by $\textstyle \hat{r}_i - \tfrac{1}{B-1}\sum\nolimits_{j \not = i} \hat{r}_j$ for i.i.d. samples $\hat{r}_j$ with $j \in \{1,\ldots, B\}$. Then the expected advantage is zero and an unbiased estimator of the variance is $\textstyle \tfrac{1}{B}\sum\nolimits_{h}(\hat{r}_h -\tfrac{1}{B-1}\sum\nolimits_{l\not=h} \hat{r}_l)^2$. Moreover, standardizing RLOO results precisely in the advantage function employed by \citeauthor{shao2024deepseekmath} \citeyearpar{shao2024deepseekmath} for `Group Relative Policy Optimization` (GRPO):
    \begin{equation*}
        \textstyle \underbrace{\frac{\hat{r}_i - \tfrac{1}{B-1}\sum\nolimits_{j \not = i} \hat{r}_j}{\sqrt{\tfrac{1}{B}\sum_{h}(\hat{r}_h -\tfrac{1}{B-1}\sum_{l\not=h} \hat{r}_l)^2}}}_{\text{standardized RLOO}}
        = \underbrace{\frac{\hat{r}_i - \tfrac{1}{B}\sum\nolimits_{j} \hat{r}_j}{\sqrt{\tfrac{1}{B}\sum_{h}(\hat{r}_h -\tfrac{1}{B}\sum_{l} \hat{r}_l)^2}}}_{\text{GRPO}}
    \end{equation*}
\end{prop}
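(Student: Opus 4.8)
The plan is to reduce the entire proposition to one elementary algebraic identity, after which all three assertions fall out immediately. Write $\bar r := \tfrac1B\sum_j \hat r_j$ for the full batch mean. Since $\sum_{j\neq i}\hat r_j = B\bar r - \hat r_i$, a one-line computation gives
\begin{equation*}
    \hat r_i - \tfrac{1}{B-1}\textstyle\sum_{j\neq i}\hat r_j \;=\; \tfrac{(B-1)\hat r_i - B\bar r + \hat r_i}{B-1} \;=\; \tfrac{B}{B-1}\,(\hat r_i - \bar r),
\end{equation*}
so the RLOO advantage is just the mean-centered (GRPO-style) advantage rescaled by the constant $\tfrac{B}{B-1}$. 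Everything else is bookkeeping around this.

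For the zero-mean claim I would invoke only linearity of expectation together with the fact that the $\hat r_j$ are identically distributed: $\mathbb E[\hat r_i - \tfrac{1}{B-1}\sum_{j\neq i}\hat r_j] = \mathbb E[\hat r_i] - \tfrac{1}{B-1}\sum_{j\neq i}\mathbb E[\hat r_j] = 0$ (no independence is used). For the unbiased-variance claim, note that because the advantage has mean zero its variance equals its second moment $\mathbb E[(\hat r_1 - \tfrac{1}{B-1}\sum_{l\neq 1}\hat r_l)^2]$; and since the summands $A_h := \hat r_h - \tfrac{1}{B-1}\sum_{l\neq h}\hat r_l$ are identically distributed (by exchangeability of the $\hat r_j$, which i.i.d.\ supplies), $\mathbb E[\tfrac1B\sum_h A_h^2] = \mathbb E[A_1^2]$, which is exactly that second moment. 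The one point requiring a little care is that the $A_h$ are \emph{not} independent of one another, so the argument must go through linearity of expectation and identical marginals rather than any i.i.d.\ reasoning on the $A_h$ themselves.

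Finally, for the standardization identity I would substitute the rescaling into both numerator and denominator of the standardized RLOO expression. The numerator becomes $\tfrac{B}{B-1}(\hat r_i - \bar r)$, and the denominator becomes $\sqrt{\tfrac1B\sum_h \tfrac{B^2}{(B-1)^2}(\hat r_h - \bar r)^2} = \tfrac{B}{B-1}\sqrt{\tfrac1B\sum_h(\hat r_h - \bar r)^2}$, so the constant $\tfrac{B}{B-1}$ cancels and one is left with $(\hat r_i - \bar r)\big/\sqrt{\tfrac1B\sum_h(\hat r_h - \bar r)^2}$, which is precisely the GRPO advantage of \citeauthor{shao2024deepseekmath} \citeyearpar{shao2024deepseekmath} (recalling $\bar r = \tfrac1B\sum_j \hat r_j$). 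There is no real obstacle here: the proposition is essentially a scale-invariance observation, and the only things to watch are the non-independence noted above and consistently tracking the $\tfrac1B$ versus $\tfrac{1}{B-1}$ normalizations.
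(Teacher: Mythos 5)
Your proposal is correct and follows essentially the same route as the paper's proof: the same algebraic rewriting $\hat r_i - \tfrac{1}{B-1}\sum_{j\neq i}\hat r_j = \tfrac{B}{B-1}(\hat r_i - \tfrac1B\sum_j \hat r_j)$ with cancellation of the constant in the standardization, the same linearity-of-expectation argument for the zero mean, and the same second-moment argument for unbiasedness of the variance estimator. Your explicit remark that the $A_h$ are identically distributed but not independent is a small clarification the paper leaves implicit, but it does not change the argument.
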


\section{Constant-Time GP Inference}\label{sec:constant_time_gp_inference_with_linear_kernels}
To ensure efficient runtime in large-scale settings and to integrate with deep neural features $\phi : X \to \mathbb R^d$, we consider Gaussian processes (GPs) with linear kernels over $\mathbb R^d$ \citep{rasmussen2006gaussian}. As we explain in the following, our implementation requires a constant $\Theta(d^2)$ memory and compute for conditioning on an observation, inferring posterior mean \& variance at a point, and even conducting marginal likelihood maximization. In contrast to a straightforward implementation, it does not increase with the number of past observations.

\subsection{Setting}
Let $R \sim \mathcal {GP}(\nu, \lambda^2 \bar k)$ with linear kernel $\bar k_{x,z} = \phi(x)^T \phi(z)$ for $\phi:X \to \mathbb R^d$. Here, $\lambda \in \mathbb R_+$ and $\nu \in \mathbb R$ denote parameters of the prior. We assume that observations $y_x$ of $r_x$ come with i.i.d. additive Gaussian noise, i.e., $Y_x \sim R_x + \lambda\,  \varepsilon$ for $\varepsilon \sim \mathcal N(0, \sigma_{noise}^2)$ independent across observations. Then, given a batch of observations $(x_O, y_O)$, the posterior GP takes the closed form
\begin{align}
    \mu_x & = \nu + {\bar k_{x, x_O}}^T (\bar k_{x_O, x_O} + \sigma_{nar}^2 I_{s})^{\text{-}1} (y_O - \nu),\label{eq:gaussian_posterior_mean}\\
    k_{x,z}/\lambda^2 & = \bar k_{x,z} - {\bar k_{x, x_O}}^T (\underbrace{\bar k_{x_O, x_O} + \sigma_{nar}^2 I_{s}}_{\Sigma = \Sigma_{Y_O} / \lambda^2})^{\text{-}1} \bar k_{z, x_O},\label{eq:gaussian_posterior_covariance}
\end{align}
where $\sigma_{nar} = \sigma_{noise} / \lambda$ denotes the noise to amplitude ratio and $\Sigma_{Y_O} = \lambda^2 \Sigma$ the observation covariance matrix.

\subsection{Closed-Form MLM}
In our setup, the only free parameters are the global scale $\lambda \in \mathbb R_+$ and the global offset $\nu \in \mathbb R$. We avoid optimizing $\sigma_{nar} \in \mathbb R_+$ due to (1) its dual use for numerical stability of the inverse of $\Sigma$ and (2) analytical intractability of general marginal likelihood maximization. As we prove in Proposition~\ref{prop:marginal_likelihood_maximization_closed_form}, in this setting the marginal data likelihood can be maximized efficiently in closed form.
\begin{prop}\label{prop:marginal_likelihood_maximization_closed_form}
    Let $Y \sim \mathcal N(\nu, \lambda^2 \Sigma)$. Then for $y \in \mathbb R^s$, the (marginal) likelihood $p(Y = y)$ is maximized if
    \begin{align*}
        \nu^{opt} & = \tfrac{y^T \Sigma^{\text{-}1} \mathds 1}{\mathds 1^T \Sigma^{\text{-}1} \mathds 1} \text{ and } \lambda^{opt} = \sqrt{\tfrac{(y - \nu^{opt} \mathds 1)^T \Sigma^{\text{-}1}(y - \nu^{opt} \mathds 1)}{s}}.
    \end{align*}
\end{prop}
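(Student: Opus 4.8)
The plan is to write the log-likelihood of the Gaussian $Y \sim \mathcal N(\nu, \lambda^2\Sigma)$ explicitly as a function of the two scalar parameters $\nu$ and $\lambda$, then maximize it by taking partial derivatives and setting them to zero. Since $\Sigma$ is fixed (symmetric positive definite) and only $\nu$ and $\lambda$ vary, the covariance determinant is $\det(\lambda^2\Sigma) = \lambda^{2s}\det\Sigma$, so up to additive constants independent of $(\nu,\lambda)$ we have
\begin{equation*}
    \ell(\nu,\lambda) = -\tfrac{s}{2}\ln(\lambda^2) - \tfrac{1}{2\lambda^2}(y-\nu\mathds 1)^T\Sigma^{\text{-}1}(y-\nu\mathds 1) + \text{const}.
\end{equation*}
I would first optimize over $\nu$ for fixed $\lambda$: the quadratic form $(y-\nu\mathds 1)^T\Sigma^{\text{-}1}(y-\nu\mathds 1)$ is a strictly convex parabola in $\nu$ (strict because $\Sigma^{\text{-}1}$ is positive definite and $\mathds 1 \neq 0$), so setting its derivative $-2\mathds 1^T\Sigma^{\text{-}1}(y-\nu\mathds 1)=0$ yields the unique minimizer $\nu^{opt} = (y^T\Sigma^{\text{-}1}\mathds 1)/(\mathds 1^T\Sigma^{\text{-}1}\mathds 1)$, which does not depend on $\lambda$. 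This is just the generalized-least-squares estimate of a common mean.

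Next I would substitute $\nu^{opt}$ back and optimize over $\lambda$. Writing $Q := (y-\nu^{opt}\mathds 1)^T\Sigma^{\text{-}1}(y-\nu^{opt}\mathds 1) \ge 0$, the profile objective becomes $-\tfrac{s}{2}\ln(\lambda^2) - \tfrac{Q}{2\lambda^2}$. Differentiating with respect to $\lambda^2$ (treat $u=\lambda^2>0$ as the variable) gives $-\tfrac{s}{2u} + \tfrac{Q}{2u^2} = 0$, i.e.\ $u = Q/s$, hence $\lambda^{opt} = \sqrt{Q/s}$ as claimed. A second-derivative check (or noting the objective tends to $-\infty$ as $u\to 0^+$ and as $u\to\infty$) confirms this is the global maximum in $\lambda$; combined with the strict convexity in $\nu$, the pair $(\nu^{opt},\lambda^{opt})$ is the unique global maximizer.

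The only genuine subtlety—rather than an obstacle—is handling the degenerate edge case $Q=0$ (which forces $y = \nu^{opt}\mathds 1$, i.e.\ the data is exactly constant): then the likelihood is unbounded as $\lambda\to 0$ and no maximizer exists in the open set $\lambda>0$. I would simply note this measure-zero exception, or assume $y$ is not a multiple of $\mathds 1$ (which holds generically and certainly whenever observations carry genuine noise). Everything else is a routine two-variable calculus computation, so I do not anticipate any real difficulty; the main care is just to justify uniqueness via strict convexity/concavity in the two parameters separately and to state the genericity caveat.
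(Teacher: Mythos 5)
Your proposal is correct and follows essentially the same route as the paper: both write the Gaussian log-likelihood explicitly and solve the first-order conditions in $\nu$ and $\lambda$, with the paper invoking a general gradient lemma where you compute the two scalar derivatives directly (and organize it as a profile over $\nu$ then $\lambda$). The only substantive difference is the degenerate case $y \propto \mathds 1$: you correctly note the supremum is not attained for $\lambda > 0$, whereas the paper interprets $\lambda^{opt}=0$ as an optimal point-mass limit --- both readings are acceptable and the generic case is identical.
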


\subsection{Bypassing the Covariance Matrix}
The conditioning on observations in Equations~(\ref{eq:gaussian_posterior_mean}) and~(\ref{eq:gaussian_posterior_covariance}) as well as the closed-form marginal likelihood maximization outlined in Proposition~\ref{prop:marginal_likelihood_maximization_closed_form} both rely on computing, storing, and inverting the unscaled prior data covariance matrix $\Sigma$. Since $\Sigma$ grows in $s$, trivial application of the closed formulas would prevent constant-time GP inference. However, for linear kernels, a computationally much more efficient form can be exploited instead. Define $\Phi := \begin{bmatrix} \phi(x_1) & \ldots & \phi(x_s) \end{bmatrix} \in \mathbb R^{d \times s}$. Then, per the Sherman-Morrison-Woodbury formula
\begin{align}
    \Sigma^{\text{-}1} & = (\Phi^T \Phi + \sigma_{nar}^2 I_s)^{\text{-}1}
     = \frac{1}{\sigma_{nar}^2} (I_s - \Phi^T (\Phi \Phi^T + \sigma_{nar}^2 I_d)^{\text{-}1} \Phi),\nonumber\\
    \Phi \Sigma^{\text{-}1} & = (\underbrace{\Phi \Phi^T + \sigma_{nar}^2 I_d}_{=:\Psi})^{\text{-}1} \Phi, \text{ and }\label{eq:sherman_morrison_first}\\
     I_d - \Phi \Sigma^{\text{-}1} \Phi^T & = I_d - \Psi^{\text{-}1} \Phi \Phi^T = \sigma_{nar}^2 \Psi^{\text{-}1}.\label{eq:sherman_morrison_second}
\end{align}
Note that the expression to the left of Equation~(\ref{eq:sherman_morrison_first}) underlies the conditioning of the mean in Equation~(\ref{eq:gaussian_posterior_mean}) and that the left-hand expression of Equation~(\ref{eq:sherman_morrison_second}) 
is used to condition the kernel in Equation~(\ref{eq:gaussian_posterior_covariance}).
Therefore, given $\Psi^{\text{-}1} \in \mathbb R^{d \times d}$, and $\Phi y_O, \Phi \mathds 1 \in \mathbb R^d$, computing $\mu_x$ and $k_{x,z}$ according to Equations~(\ref{eq:gaussian_posterior_mean}) and~(\ref{eq:gaussian_posterior_covariance}) can be performed in $\Theta(d^2)$ steps for any prior parameter-pair $(\nu, \lambda)$. Furthermore, having additionally access to $y_O^T y_O, \mathds 1^T \mathds 1, y_O^T \mathds 1 \in \mathbb R^n$ allows performing closed-from marginal likelihood maximization according to Proposition~\ref{prop:marginal_likelihood_maximization_closed_form}, also in $\Theta(d^2)$ steps.
So, to avoid scaling in the number of past observations $s$, we only need to keep track of $\Psi^{\text{-}1}$, $\Phi y_O$, $\Phi \mathds 1$, $y_O^T y_O$, $\mathds 1^T \mathds 1$ and $y_O^T \mathds 1$ during Bayesian optimization, i.e., while we add columns to $\Phi$ and new entries to $y_O$ and $\mathds 1$. For most of these terms, this is trivial. Only $\Psi^{\text{-}1}$ requires again relying on Sherman-Morrison-Woodbury, i.e.,
\begin{align*}
    \Psi_{new}^{-1} = & (
    \Phi \Phi^T + \sigma_{nar}^2 I
    + \phi(x) \phi(x)^T)^{\text{-}1}\\
    = & \Psi^{\text{-}1} - \Psi^{\text{-}1} \phi(x) \tfrac{1}{1+ \phi(x)^T \Psi^{\text{-}1} \phi(x)}\phi(x)^T \Psi^{\text{-}1}.
\end{align*}
Keeping track of these quantities requires $\Theta(d^2)$ memory with each additional observation demanding $\Theta(d^2)$ compute, validating our complexity claims.

\section{Proofs}\label{sec:proofs}

Before diving into the individual proofs of the theorems and propositions introduced in the paper, we take the opportunity to present proof sketches for the main results.

\subsection{Proof Sketches}

\paragraph{Proposition 1} We show concavity by deriving the Hessian of the \vbos{} objective, which is a diagonal matrix with non-positive entries. To derive the unbiased estimator of the gradients, we apply the generalized score trick for reward functions that depend on the model parameters $\theta$.

\paragraph{Proposition 2} After verifying convexity of $f$, we simply expand the definition of the Bregman divergence and use that $\nabla f(\tilde \pi) = \mu-c\mathds{1}$ for a Lagrange multiplier $c$, since $\tilde \pi$ is known to maximize the \vbos{} objective on the probability simplex and known to lie in the relative interior.

\paragraph{Theorem 1} We first use linearity of expectation and the law of total expectation to individually consider the expected instantaneous regret $\mathbb E[\mathbb E[R^* - R_{x^t} | \mathcal H^t]]$ at each time step. Next, we leverage the technical result from Corollary~\ref{cor:gaussian_expected_maximum_upper_bound}, established by prior work, to upper bound $\mathbb E[R^*|\mathcal H^t]$ with $\mathcal V(\tilde \pi^t |\mathcal H^t)$. In contrast to previous analysis by \citeauthor{tarbouriech2024probabilistic} \citeyearpar{tarbouriech2024probabilistic}, we incorporate practical limitations of gradient-based objective maximization by replacing $\mathcal V(\tilde \pi^t)$ with $D_{\sigma^t}(\pi^t, \tilde \pi^t) + \mathcal V(\pi^t)$. Since $x^t$ follows $\pi^t$ in practice, and not $\tilde \pi^t$, it is essential to have $\mathcal V(\pi^t)$ instead of $\mathcal V(\tilde \pi^t)$ as an upper bound. Now, following previous analysis, we simplify $\mathcal V(\pi^t | \mathcal H^t) - \mathbb E[R_{x^t}|\mathcal H^t]$ to $\mathbb E[\sigma_{x^t} \sqrt{-2 \ln \pi^t(x^t)} | \mathcal H^t]$ and apply the Cauchy-Schwarz inequality to arrive at the product of a term that scales in $\sqrt{T}$ and $\sqrt{\sum_t \mathbb E[\sigma_{x^t}^2]}$. Whereas \citeauthor{tarbouriech2024probabilistic} \citeyearpar{tarbouriech2024probabilistic} bound the latter based on the pigeon-hole principle, we instead bound it by the maximal information gain $\gamma^T$, a technique introduced by \citeauthor{srinivas2009gaussian} \citeyearpar{srinivas2009gaussian} to analyze multi-armed bandits with correlated rewards.
\newpage
\setcounter{prop}{0}
\setcounter{thm}{0}

\subsection{Formal Proofs}

\begin{prop}
    Consider the {\normalfont\vbos{}} objective $\mathcal V (\pi^\theta) := \mathbb E_{x \sim \pi^\theta} [ \mu_x + \sqrt{2 \ln(1/{\pi^\theta_x})} \cdot\, \sigma_{x} ]$. The {\normalfont\vbos{}} objective $\mathcal V$ is concave (strictly if
    $\sigma_x > 0 \ \forall x$) and its gradients are
    \begin{equation*}
        \tfrac{d}{d\theta} \mathcal V(\pi^\theta) = \mathbb E_{x \sim \pi^\theta} \big[ \big(\mu_x \underbrace{- \xi -v^{\text{-}1}(\pi^\theta_x) \cdot \sigma_{x}}_{-\mu_x^{\theta} \text{ for } \xi = \kappa} 
        \big) \cdot \tfrac{d \ln \pi^\theta_x}{d\theta} \big].
    \end{equation*}
    $\xi \in \mathbb R$ is an arbitrary baseline and $\textstyle \text{-}v^{\text{-}1}(u) = \sqrt{\text{-}2 \ln({u})} - 1/\sqrt{\text{-}2 \ln({u}}) \sim \sqrt{\text{-}2 \ln({u})}$ as $u \to 0$.
\end{prop}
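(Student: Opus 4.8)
The plan is to prove the two assertions — concavity and the gradient identity — by direct computation, treating $\mathcal V$ first as a function of the distribution $\pi$ on the simplex $\Delta^{|X|-1}$, and then as a function of $\theta$ through a differentiable, interior-valued parametrization $\theta \mapsto \pi^\theta$ (as holds for softmax-parametrized LLM policies, so that $\pi^\theta_x \in (0,1)$ and every expression below is well defined).

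\textbf{Concavity.} I would write $\mathcal V(\pi) = \langle \mu, \pi\rangle + \sum_x \sigma_x\, g(\pi_x)$ with $g(p) := p\sqrt{-2\ln p}$ on $(0,1]$ and $g(0):=0$. The linear term contributes nothing to the Hessian, so the Hessian of $\mathcal V$ in the coordinates of $\pi$ is the diagonal matrix $\mathrm{diag}\big(\sigma_1 g''(\pi_1),\dots,\sigma_{|X|} g''(\pi_{|X|})\big)$. A one-line computation gives $g'(p) = \sqrt{-2\ln p} - 1/\sqrt{-2\ln p}$ and $g''(p) = -\tfrac{1}{p\sqrt{-2\ln p}} - \tfrac{1}{p(-2\ln p)^{3/2}} < 0$ on $(0,1)$. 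Since $\sigma_x \ge 0$ the Hessian is negative semidefinite, so $\mathcal V$ is concave (concavity on the interior extends to the closure by continuity of $g$); if $\sigma_x > 0$ for all $x$ it is negative definite, giving strict concavity.

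\textbf{Gradient.} Writing $\mathcal V(\pi^\theta) = \sum_x \pi^\theta_x\, \hat r_x(\theta)$ with $\hat r_x(\theta) := \mu_x + \sigma_x\sqrt{-2\ln\pi^\theta_x}$, and using finiteness of $X$ to exchange $\tfrac{d}{d\theta}$ with $\sum_x$, the product rule yields two terms. The first, $\sum_x \big(\tfrac{d}{d\theta}\pi^\theta_x\big)\hat r_x = \mathbb E_{x\sim\pi^\theta}\big[\hat r_x\, \tfrac{d\ln\pi^\theta_x}{d\theta}\big]$, is the ordinary score-trick term. For the second I would compute $\tfrac{d}{d\theta}\hat r_x = -\tfrac{\sigma_x}{\sqrt{-2\ln\pi^\theta_x}}\,\tfrac{d\ln\pi^\theta_x}{d\theta}$, whence $\sum_x \pi^\theta_x\,\tfrac{d}{d\theta}\hat r_x = -\mathbb E_{x\sim\pi^\theta}\big[\tfrac{\sigma_x}{\sqrt{-2\ln\pi^\theta_x}}\,\tfrac{d\ln\pi^\theta_x}{d\theta}\big]$. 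Collecting the coefficient of $\tfrac{d\ln\pi^\theta_x}{d\theta}$ gives $\mu_x + \sigma_x\big(\sqrt{-2\ln\pi^\theta_x} - 1/\sqrt{-2\ln\pi^\theta_x}\big)$. To identify the bracket, I would invert $p = v(c) = \exp\!\big(-(\sqrt{c^2+4}-c)^2/8\big)$ of Equation~(\ref{eq:vbos_closed_form}): with $s := \sqrt{-2\ln p}$ one gets $2s = \sqrt{c^2+4}-c$, and squaring $\sqrt{c^2+4} = 2s+c$ gives $s^2 + sc = 1$, i.e. $c = 1/s - s$; hence $v^{-1}(p) = 1/\sqrt{-2\ln p} - \sqrt{-2\ln p}$, and $-v^{-1}(p)$ is exactly the bracket, so $\tfrac{d}{d\theta}\mathcal V(\pi^\theta) = \mathbb E_{x\sim\pi^\theta}\big[(\mu_x - v^{-1}(\pi^\theta_x)\sigma_x)\,\tfrac{d\ln\pi^\theta_x}{d\theta}\big]$. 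Finally, for any constant $\xi$, $\mathbb E_{x\sim\pi^\theta}\big[\xi\,\tfrac{d\ln\pi^\theta_x}{d\theta}\big] = \xi\,\tfrac{d}{d\theta}\sum_x \pi^\theta_x = 0$, so $\xi$ may be subtracted inside the expectation, yielding the stated formula; the asymptotic $-v^{-1}(u)\sim\sqrt{-2\ln u}$ as $u\to 0$ is immediate since $1/\sqrt{-2\ln u}\to 0$.

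\textbf{Main obstacle.} The crux is the $\theta$-dependence of the surrogate $\hat r_x$: blindly applying the score trick (treating $\hat r_x$ as constant) would drop the second, "inner" term and produce an incorrect gradient. Getting that correction term right — and observing that it again has the form of a $\pi^\theta$-expectation of a scalar times $\tfrac{d}{d\theta}\ln\pi^\theta_x$, so that it merges cleanly with the first term into the pseudo-reward $\mu_x - v^{-1}(\pi^\theta_x)\sigma_x$ — is the delicate step. A minor technical caveat is differentiability at the simplex boundary, which I sidestep by working with interior-valued (e.g. softmax) policies.
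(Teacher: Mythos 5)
Your proof is correct and follows essentially the same route as the paper's: concavity via the diagonal, non-positive Hessian of the entropy-like term, and the gradient via the score trick generalized to account for the $\theta$-dependence of the surrogate $\hat r_x$ (you derive this correction with the product rule where the paper invokes the generalized score trick directly, and you compute $g''$ explicitly where the paper uses the inverse function rule on $v$ — the same quantity either way). Your explicit inversion of $v(c)$ to confirm $v^{-1}(p)=1/\sqrt{-2\ln p}-\sqrt{-2\ln p}$ is a small bonus check the paper leaves implicit.
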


\begin{proof}[Proof of Proposition~\ref{prop:derivatives_of_vbos}]
    To show concavity, we follow the proof of Proposition~4 in \cite{menet2025lite} by noting that $v^\prime> 0$ and considering the (diagonal) Hessian of $\mathcal V$, where 
    \begin{align*}
        \frac{\partial}{\partial r_x} \mathcal V(r) & = \mu_x + \sigma_x (\sqrt{2 \ln (1/r_x)} - \frac{1}{\sqrt{2 \ln (1/r_x)}}) = \mu_x - \sigma_x v^{\text{-}1}(r_x), \text{ and }\\
        \frac{\partial^2}{\partial r_x \partial r_z} \mathcal V(r) & = - \sigma_{x} \mathds 1_{x=z} \frac{1}{v^\prime(v^{\text{-}1}(r_x))} \begin{cases}
            \leq 0 & x = z\\
            = 0 & x \not = z
        \end{cases}.
    \end{align*}
    Note the use of the inverse function rule $\tfrac{d}{d a} v^{\text{-}1}(a) = 1 /\tfrac{d}{db} v(b) \vert_{b=v^{\text{-}1}(a)}$.
    In case $\sigma_x > 0\ \forall x$, strict concavity is even ensured. 
    So, we only need to show the formula for gradients. We make use of the score trick, which states that
    \begin{equation*}
        \textstyle \frac{d}{d\theta} \mathbb E_{x \sim \pi^\theta}[f(x, \pi^\theta(x))] = \mathbb E_{x \sim \pi^\theta}[\big( f(x, y)+ y \frac{d}{dy} f(x,y) \big )\big\vert_{y=\pi^\theta(x)} \tfrac{d \ln \pi^\theta(x)}{d\theta}].
    \end{equation*}
    Plugging in and using that $\textstyle \mathbb E_{x \sim \pi^\theta} [\xi \tfrac{d \ln \pi^\theta_x}{d \theta}] = 0$ for any constant $\xi \in \mathbb R$ (that neither depends on $\theta$ nor $x$) results in the desired expression:
    \begin{align*}
        \textstyle \tfrac{d}{d\theta} \mathcal V(\pi^\theta) = & \mathbb E_{x \sim \pi^\theta} \Big[ \big(\mu_x + \sqrt{\text{-}2 \ln(\pi_x^\theta)} \cdot \sigma_x - \tfrac{1}{\sqrt{\text{-}2 \ln(\pi_x^\theta)}} \cdot \sigma_x \big)\tfrac{d \ln \pi^\theta_x}{d\theta} \Big]\\
        \textstyle = & \mathbb E_{x \sim \pi^\theta} \Big[ \big(\mu_x - v^{\text{-}1}(\pi^\theta_x) \cdot \sigma_x \big)\frac{d \ln \pi^\theta_x}{d\theta}\Big]\\
        \textstyle = &  \mathbb E_{x \sim \pi^\theta} \Big[ \big(\mu_x - \xi - v^{\text{-}1}(\pi^\theta_x) \cdot \sigma_x \big)\tfrac{d \ln \pi^\theta_x}{d\theta}
    \Big].
    \end{align*}
\end{proof}

\begin{prop}
    Let $\sigma \in \mathbb R_+^{|X|}$. For the convex $\textstyle f(p) := - \sum_x p_x \sigma_x \sqrt{-2 \ln p_x}$, define the Bregman divergence $D_\sigma(p,q) = f(p) - f(q) - \langle \nabla f(q), p-q\rangle$. Then the Bregman divergence of any $\pi \in \Delta^{|X|-1}$ from the maximizer $\textstyle \tilde \pi := \arg\max_{p\in \Delta^{|X|-1}}
    \mathcal V(p)$
    is given by $D_\sigma (\pi, \tilde \pi) = \mathcal V(\tilde \pi) - \mathcal V(\pi)$.
\end{prop}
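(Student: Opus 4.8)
The plan is to rewrite the \vbos{} objective as a linear term minus $f$ and then reduce the claimed identity to a single first-order optimality condition. Since $\sqrt{2\ln(1/\pi_x)} = \sqrt{-2\ln\pi_x}$, one has $\mathcal V(\pi) = \langle\mu,\pi\rangle + \sum_x \pi_x\sigma_x\sqrt{-2\ln\pi_x} = \langle\mu,\pi\rangle - f(\pi)$. Convexity of $f$ then follows from the concavity of $\mathcal V$ established in Proposition~\ref{prop:derivatives_of_vbos} (adding the linear term $\langle\mu,\cdot\rangle$ does not affect concavity); alternatively one can check it directly, as $f$ is separable, $f(p) = \sum_x \sigma_x g(p_x)$ with $g(t) = -t\sqrt{-2\ln t}$ and $g''(t) = \tfrac1t(-2\ln t)^{-1/2} + \tfrac1t(-2\ln t)^{-3/2} > 0$ on $(0,1)$, while $\sigma_x \ge 0$.

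Next I would expand both sides with $\mathcal V = \langle\mu,\cdot\rangle - f$: this gives $\mathcal V(\tilde\pi) - \mathcal V(\pi) = f(\pi) - f(\tilde\pi) + \langle\mu,\tilde\pi - \pi\rangle$ and, from the definition, $D_\sigma(\pi,\tilde\pi) = f(\pi) - f(\tilde\pi) + \langle\nabla f(\tilde\pi),\tilde\pi - \pi\rangle$, so the claim reduces to showing $\langle\nabla f(\tilde\pi) - \mu,\ \tilde\pi - \pi\rangle = 0$. To get this, I would invoke first-order optimality of $\tilde\pi$ on the simplex. By the near-closed form in Equation~(\ref{eq:vbos_closed_form}), $\tilde\pi_x = v((\mu_x-\kappa^*)/\sigma_x) > 0$ for all $x$, so $\tilde\pi$ lies in the relative interior of $\Delta^{|X|-1}$; hence the only binding constraint at $\tilde\pi$ is $\sum_x p_x = 1$, and stationarity of the concave $\mathcal V$ there forces $\nabla\mathcal V(\tilde\pi) = \mu - \nabla f(\tilde\pi) = c\,\mathds 1$ for some Lagrange multiplier $c\in\mathbb R$. (Equivalently, by Proposition~\ref{prop:derivatives_of_vbos}, $\nabla f(\tilde\pi)_x = \sigma_x v^{-1}(\tilde\pi_x)$, and $\tilde\pi$ being the maximizer makes $\mu_x - \sigma_x v^{-1}(\tilde\pi_x)$ independent of $x$.)

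Plugging $\nabla f(\tilde\pi) = \mu - c\mathds 1$ into the reduced identity closes the argument: $\langle\nabla f(\tilde\pi) - \mu,\ \tilde\pi - \pi\rangle = -c\langle\mathds 1,\ \tilde\pi - \pi\rangle = -c(1-1) = 0$, since both $\tilde\pi$ and $\pi$ are probability vectors. Hence $D_\sigma(\pi,\tilde\pi) = \mathcal V(\tilde\pi) - \mathcal V(\pi)$.

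The one point to be careful about is the boundary behaviour of $f$: the scalar derivative $g'(t) = -\sqrt{-2\ln t} + 1/\sqrt{-2\ln t}$ diverges as $t\downarrow 0$, so $\nabla f$ is defined only on the relative interior of the simplex. This is harmless here, because $f$ is only ever differentiated at $\tilde\pi$, which is interior by Equation~(\ref{eq:vbos_closed_form}); and $f$ itself extends continuously to the closed simplex with the convention $0\cdot\sqrt{+\infty} = 0$, so $D_\sigma(\pi,\tilde\pi)$ is well defined for every $\pi\in\Delta^{|X|-1}$, including boundary points. The only genuinely substantive input beyond routine algebra is the interiority of the maximizer together with the Lagrange condition; everything else is bookkeeping.
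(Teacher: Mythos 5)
Your proof is correct and follows essentially the same route as the paper's: both write $\mathcal V(p) = \langle\mu,p\rangle - f(p)$, verify convexity of $f$, use interiority of $\tilde\pi$ to obtain the Lagrange condition $\nabla f(\tilde\pi) = \mu - c\mathds 1$, and cancel the constant because both $\pi$ and $\tilde\pi$ sum to one. Your explicit reduction to $\langle\nabla f(\tilde\pi)-\mu,\,\tilde\pi-\pi\rangle = 0$ and the remark on boundary behaviour of $\nabla f$ are just slightly more detailed presentations of the same argument.
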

\begin{proof}[Proof of Proposition~\ref{prop:bregman_divergence}]
    We follow parts of the proof of Lemma~5 by \citeauthor{tarbouriech2024probabilistic} \citeyearpar{tarbouriech2024probabilistic}. First, recall from Equation~(\ref{eq:vbos_closed_form}) the closed-form expression for $\tilde \pi = v(\tfrac{\mu_x - \kappa^*}{\sigma_x})$, where $v$ is a cumulative distribution function with $v^{\text{-}1}(u) = 1/\sqrt{-2 \ln u} - \sqrt{-2 \ln u}$. Next, notice that due to $v$ being a cumulative distribution function one has $\textstyle - \tfrac{d^2}{d p_x^2} p_x \sqrt{-2 \ln p_x} = \tfrac{d}{d p_x} v^{\text{-}1}(p_x) = 1/(v'(v^{\text{-}1}(p_x)) > 0$. As such, $f(p)$ is continuously-differentiable and strictly convex (positive definite Hessian), i.e., the Bregman divergence is well-defined. Next, recall that $\textstyle\mathcal V(p) = \sum_x p_x \mu_x + \sigma_x p_x \sqrt{-2 \ln p_x}$, and thus $\textstyle \mathcal V(p) = \langle \mu, p \rangle - f(p)$. Since $\tilde \pi$ maximizes $\mathcal V$ and is in the relative interior of the simplex (the objective is concave and at the border its derivatives blow up), the Karush-Kuhn-Tucker conditions give $c \mathds{1} = \nabla \mathcal V(\tilde \pi) = \mu - \nabla f(\tilde \pi)$ for some $c \in \mathbb R$, i.e., $\nabla f(\tilde \pi) = \mu-c\mathds{1}$ and thus $\textstyle \mathcal V(\pi) = \langle \nabla f(\tilde \pi), \pi \rangle - f(\pi) + c$. Plugging into $\mathcal V(\tilde \pi) - \mathcal V(\pi)$ then gives precisely the definition of $D_\sigma(\pi, \tilde \pi)$.
\end{proof}

\begin{thm}
    Let $R \sim \mathcal N(\mu, K)$ with $K_{x,x} \leq 1$ $\forall x \in X$ and additive observation noise $\mathcal N(0, \sigma^2_{n})$.\footnote{The theorem also holds for heteroscedastic additive Gaussian noise by replacing $\sigma_n$ with $\max_{x\in X} \sigma_{n}(x)$.} If $R$ is observed at $x^t \sim \pi^t$ for a policy $\pi^t$ depending on history $\mathcal H^t$, then
    \begin{equation*}
        \phantom{\displaystyle \sum}\!\!\!\!\!\!\!\! \mathbb E[ \textstyle \sum_{t=1}^T R^* - R_{x^t}] \leq \sqrt{C_{\sigma_{n}} H T \gamma^T}\ +\ \mathbb E\sum_{t=1}^T \! D_{\sigma^t}(\pi^t, \tilde \pi^{t}).
    \end{equation*}
    $\textstyle C_{\sigma_n} := 4/{\ln(1+\sigma^{-2}_n)}$ is a constant, $\textstyle H := \tfrac{1}{T} \sum_t H[\pi^t | \mathcal H^t]$ is the expected average entropy of the policy and hence upper bounded by $\ln |X|$, $\textstyle \gamma^T := \max\nolimits_{L^T} I(Y_{L^T}; R)$ is the maximum information gain for $T$ observation locations $L^T$, and $\textstyle \tilde \pi^{t}$ is the unconstrained maximizer of {\normalfont\vbos{}} given $\mathcal H^t$.
\end{thm}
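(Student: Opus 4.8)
The plan is to bound the Bayesian instantaneous regret round by round and then sum. By linearity of expectation and the tower rule, $\mathbb{E}[\sum_{t=1}^T R^* - R_{x^t}] = \sum_{t=1}^T \mathbb{E}\big[\mathbb{E}[R^* - R_{x^t} \mid \mathcal{H}^t]\big]$, so it suffices to control $\mathbb{E}[R^* \mid \mathcal{H}^t] - \mathbb{E}[R_{x^t} \mid \mathcal{H}^t]$, where $\mu^t$, $\sigma^t$, $\mathcal{V}(\cdot \mid \mathcal{H}^t)$ and the unconstrained maximizer $\tilde\pi^t$ all refer to the Gaussian posterior given $\mathcal{H}^t$. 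For the first term I would invoke Corollary~\ref{cor:gaussian_expected_maximum_upper_bound} (the Gaussian expected-maximum bound established by prior work), which gives $\mathbb{E}[R^* \mid \mathcal{H}^t] \le \mathcal{V}(\tilde\pi^t)$. The second term equals $\mathbb{E}_{x \sim \pi^t}[\mu^t_x]$, since conditionally on $\mathcal{H}^t$ the action $x^t$ is drawn from $\pi^t$ and $\mathbb{E}[R_x \mid \mathcal{H}^t] = \mu^t_x$.

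The next step is to pass from the idealized maximizer $\tilde\pi^t$ to the policy actually played. Proposition~\ref{prop:bregman_divergence} gives $\mathcal{V}(\tilde\pi^t) = D_{\sigma^t}(\pi^t, \tilde\pi^t) + \mathcal{V}(\pi^t)$, and expanding $\mathcal{V}$ from Equation~(\ref{eq:vbos_variational_form}),
\[
\mathbb{E}[R^* - R_{x^t} \mid \mathcal{H}^t] \;\le\; D_{\sigma^t}(\pi^t, \tilde\pi^t) + \mathcal{V}(\pi^t) - \mathbb{E}_{x \sim \pi^t}[\mu^t_x] \;=\; D_{\sigma^t}(\pi^t, \tilde\pi^t) + \mathbb{E}_{x \sim \pi^t}\big[\sigma^t_x \sqrt{-2 \ln \pi^t_x}\big].
\]
It is essential that $\mathcal{V}(\pi^t)$, not $\mathcal{V}(\tilde\pi^t)$, survives here, since the regret is incurred by sampling from $\pi^t$; this is exactly what the Bregman term absorbs, and by Proposition~\ref{prop:bregman_divergence} it vanishes precisely when $\pi^t$ maximizes $\mathcal{V}$. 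Summing over $t$, taking total expectation, and using the tower rule on the inner expectation over $x \sim \pi^t$ yields the $\mathbb{E}\sum_t D_{\sigma^t}(\pi^t, \tilde\pi^t)$ term of the bound plus $\mathbb{E}\big[\sum_t \sigma^t_{x^t} \sqrt{-2 \ln \pi^t_{x^t}}\big]$, so it remains to show this last quantity is at most $\sqrt{C_{\sigma_n} H T \gamma^T}$.

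For the exploration sum I would apply Cauchy--Schwarz twice. Pathwise over $t$, $\sum_t \sigma^t_{x^t} \sqrt{-2 \ln \pi^t_{x^t}} \le \big(\sum_t (\sigma^t_{x^t})^2\big)^{1/2} \big(\sum_t -2 \ln \pi^t_{x^t}\big)^{1/2}$; then Cauchy--Schwarz in expectation separates the two factors. For the entropy factor, $\mathbb{E}[-\ln \pi^t_{x^t} \mid \mathcal{H}^t] = H[\pi^t \mid \mathcal{H}^t]$, so $\mathbb{E}\sum_t -2 \ln \pi^t_{x^t} = 2TH$. For the variance factor I would reuse the information-gain argument of \citeauthor{srinivas2009gaussian} \citeyearpar{srinivas2009gaussian}: the posterior variance never exceeds the prior variance, so $(\sigma^t_{x^t})^2 \le K_{x^t, x^t} \le 1$, whence $(\sigma^t_{x^t})^2 \le \frac{1}{\ln(1 + \sigma_n^{-2})} \ln\big(1 + \sigma_n^{-2} (\sigma^t_{x^t})^2\big)$ by monotonicity of $s \mapsto s / \ln(1+s)$; summing and recognizing $\frac{1}{2} \sum_t \ln\big(1 + \sigma_n^{-2} (\sigma^t_{x^t})^2\big) = I(Y_{x^1}, \dots, Y_{x^T}; R) \le \gamma^T$ gives $\sum_t (\sigma^t_{x^t})^2 \le \frac{2 \gamma^T}{\ln(1 + \sigma_n^{-2})}$, hence also in expectation. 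Multiplying the two bounds gives $\sqrt{\tfrac{2 \gamma^T}{\ln(1 + \sigma_n^{-2})}} \cdot \sqrt{2TH} = \sqrt{C_{\sigma_n} H T \gamma^T}$, which completes the proof.

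The genuinely new ingredients, relative to the analysis of \citeauthor{tarbouriech2024probabilistic} \citeyearpar{tarbouriech2024probabilistic}, are (i) carrying the Bregman term $D_{\sigma^t}(\pi^t, \tilde\pi^t)$ through the entire chain, so that an arbitrary --- e.g.\ gradient-updated --- sampling policy $\pi^t$ is permitted, and (ii) replacing their pigeonhole bound on $\sum_t (\sigma^t_{x^t})^2$ by the maximal information gain $\gamma^T$, which is what upgrades the rate to $\tilde{\mathcal{O}}(\sqrt{T \gamma^T})$. I expect the main point requiring care is the information-gain step for the \emph{adaptively and randomly} chosen locations $x^1, \dots, x^T$: one must argue that, conditionally on $\mathcal{H}^t$, the GP posterior variance at $x^t$ is deterministic and the chain-rule decomposition of $I(Y_{x^1}, \dots, Y_{x^T}; R)$ still applies --- exactly the situation handled in the GP-UCB analysis of \citeauthor{srinivas2009gaussian} \citeyearpar{srinivas2009gaussian} --- so that the bound $\sum_t (\sigma^t_{x^t})^2 \le \frac{2\gamma^T}{\ln(1+\sigma_n^{-2})}$ holds almost surely and therefore in expectation.
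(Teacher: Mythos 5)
Your proposal is correct and follows essentially the same route as the paper's proof: tower rule on the instantaneous regret, the Gaussian expected-maximum bound to get $\mathcal V(\tilde\pi^t)$, the Bregman identity of Proposition~\ref{prop:bregman_divergence} to pass to $\mathcal V(\pi^t)$, Cauchy--Schwarz to separate the entropy and posterior-variance factors, and the Srinivas-style information-gain bound $\sum_t (\sigma^t_{x^t})^2 \le \tfrac{2\gamma^T}{\ln(1+\sigma_n^{-2})}$ (the paper's Lemma~\ref{lem:information_gain_for_gaussians}). The only differences are cosmetic --- you apply Cauchy--Schwarz pathwise and then in expectation rather than once over the joint measure, and justify the key logarithmic inequality via monotonicity of $s\mapsto s/\ln(1+s)$ rather than concavity --- and you correctly identify the two ingredients (the carried Bregman term and the information-gain replacement of the pigeonhole bound) that distinguish the result from prior analyses.
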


\begin{proof}[Proof of Theorem~\ref{thm:regret_bound_of_approximate_thompson_sampling}]
    First, note that if $R$ is a multivariate Gaussian, then for $\mathcal H^t$ the history of observations at step $t$, $R^t := R | \mathcal H^t$, is also a multivariate Gaussian $\forall t$. Next, we leverage Corollary~\ref{cor:gaussian_expected_maximum_upper_bound} to upper bound the regret
    \begin{align*}
        \textstyle \mathbb E[ \textstyle \sum_t R^* - R_{x^t}] = & \textstyle \sum_t \mathbb E[\mathbb E[R^* - R_{x^t}|\mathcal H^t]]\\
        \leq & \textstyle \sum_t \mathbb E[\mathcal V(\tilde \pi^t | \mathcal H^t) - \mathbb E_{x^t \sim \pi^t}[\mu_{x^t} | \mathcal H^t]]\\
        = & \textstyle \sum_t \mathbb E[\mathcal V(\pi^t | \mathcal H^t) + D_{\sigma^t}(\pi^t, \tilde \pi^t) - \mathbb E_{x^t \sim \pi^t}[\mu_{x^t} | \mathcal H^t]]\\
        = & \textstyle \sum_t \mathbb E[\mathbb E_{x^t \sim \pi^t}[\sigma_{x^t} \sqrt{-2 \ln \pi^t(x^t)} | \mathcal H^t] + D_{\sigma^t}(\pi^t, \tilde \pi^t)].
    \end{align*}
    Note the application of Proposition~\ref{prop:bregman_divergence}, which allows us to bound the cumulative regret even if the policy $\pi^t$ differs from the one suggested by \vbos{}, i.e., $\tilde \pi^t$. Next, we use the Cauchy-Schwarz inequality as well as Lemma~\ref{lem:information_gain_for_gaussians} to uncover the dependency on maximum information gain, a measure of the complexity of the kernel. We get
    \begin{align*}
        \textstyle \sum_t \mathbb E[\mathbb E_{x^t \sim \pi^t}[\sigma_{x^t} \sqrt{-2 \ln \pi^t({x^t})} | \mathcal H^t]]
        \leq & \sqrt{\textstyle\sum_t \mathbb E[\mathbb E_{x^t \sim \pi^t} [-\ln(\pi^t({x^t}))|\mathcal H^t]]} \sqrt{\textstyle\sum_t \mathbb E[\mathbb E [2 \sigma_{x^t}^2|\mathcal H^t]]}\\
        \leq & \sqrt{\textstyle\sum_t H[\pi^t|\mathcal H^t]} \sqrt{C_{\sigma_{n}} \gamma^T}.
    \end{align*}
\end{proof}

\begin{lem}[Donsker-Varadhan variational representation]\label{lem:variational_form_of_KL_divergence}
    Fix two probability distributions \(p:\Sigma \to [0,1]\) and \(q:\Sigma \to [0,1]\) over the measurable space \((\Omega, \Sigma)\) such that \(p\) is absolutely continuous with respect to \(q\) (\(p \ll q\)). Then
    \begin{equation*}
        D_{KL}[p||q] = \sup_{X}\{\mathbb E_{p} [X] - \ln \mathbb E_{q} [\exp X]\},
    \end{equation*}
    where the supremum is taken over all measurable \(X:\Omega \to \mathbb R\) such that \(\mathbb E_{p} [X]\) and \(\mathbb E_{q} [\exp X]\) are well-defined.
\end{lem}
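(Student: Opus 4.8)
The plan is to establish the two inequalities ``$\le$'' and ``$\ge$'' between $D_{KL}[p\|q]$ and $\sup_{X}\{\mathbb E_p[X] - \ln \mathbb E_q[\exp X]\}$ separately. The ``$\ge$'' direction holds term by term, for every admissible $X$, via an exponential-tilting/convexity argument, while the ``$\le$'' direction is witnessed by the choice $X = \ln(dp/dq)$, up to a truncation when $D_{KL}[p\|q] = \infty$.

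For $D_{KL}[p\|q] \ge \mathbb E_p[X] - \ln \mathbb E_q[\exp X]$, I would fix an admissible $X$ and assume $\mathbb E_q[\exp X] \in (0,\infty)$, since otherwise the right-hand side is $-\infty$ (or ill-defined) and contributes nothing to the supremum. Introduce the exponentially tilted probability measure $q_X$ with $dq_X/dq = e^X / \mathbb E_q[\exp X]$. Because $X$ is real-valued, $e^X \in (0,\infty)$ $q$-a.e., so $q_X$ and $q$ are mutually absolutely continuous, and in particular $p \ll q$ implies $p \ll q_X$. Applying the Radon--Nikodym chain rule $\ln(dp/dq) = \ln(dp/dq_X) + \ln(dq_X/dq)$ ($p$-a.e.) and integrating against $p$ yields the identity
\begin{equation*}
D_{KL}[p\|q] = D_{KL}[p\|q_X] + \mathbb E_p[X] - \ln \mathbb E_q[\exp X].
\end{equation*}
Nonnegativity of relative entropy, $D_{KL}[p\|q_X] \ge 0$ (Jensen applied to the convex $-\ln$), then gives the claimed bound, and taking the supremum over $X$ gives one direction. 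Equivalently, one could bypass $q_X$ and apply Jensen directly to $\mathbb E_q[\exp X] = \mathbb E_p[\exp(X)\,dq/dp]$; I prefer the tilting version since the displayed identity also exposes the equality case $q_X = p$.

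For the reverse inequality, I would set $X^\star := \ln(dp/dq)$, defined $q$-a.e. and real-valued $p$-a.e. (the set $\{dp/dq = 0\}$ is $p$-null). A preliminary observation is that its negative part is always $p$-integrable: from $\ln u \le u - 1$ we get $\mathbb E_p[(X^\star)^-] = \mathbb E_p[(\ln(dq/dp))^+] \le \mathbb E_p[dq/dp] \le 1$, so $\mathbb E_p[X^\star]$ is well-defined in $(-\infty, +\infty]$ and equals $D_{KL}[p\|q]$. If $D_{KL}[p\|q] < \infty$, then $X^\star$ is admissible, $\mathbb E_q[\exp X^\star] = \mathbb E_q[dp/dq] = p(\Omega) = 1$, and hence $\mathbb E_p[X^\star] - \ln \mathbb E_q[\exp X^\star] = D_{KL}[p\|q]$, so the supremum is attained. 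If $D_{KL}[p\|q] = \infty$, use the truncations $X_n := \min\{X^\star, n\}$: each $X_n$ is bounded above, hence admissible with $\mathbb E_q[\exp X_n] \le \min\{\mathbb E_q[\exp X^\star], e^n\} \le 1$, giving $-\ln \mathbb E_q[\exp X_n] \ge 0$; meanwhile $\mathbb E_p[X_n] \to \mathbb E_p[X^\star] = \infty$ by monotone convergence on the positive parts (the negative part being the fixed integrable $(X^\star)^-$). Hence $\mathbb E_p[X_n] - \ln \mathbb E_q[\exp X_n] \to \infty = D_{KL}[p\|q]$.

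The Radon--Nikodym manipulations are routine; the one step that needs genuine care---and the main obstacle---is the attainment direction when $D_{KL}[p\|q] = \infty$, where the ``optimal'' $X^\star$ is inadmissible and must be approached by truncation, together with the bookkeeping keeping $\mathbb E_p[X_n]$ and $\mathbb E_q[\exp X_n]$ well-defined at each stage, notably the a priori finiteness of $\mathbb E_p[(X^\star)^-]$ that prevents indeterminate $\infty - \infty$ expressions.
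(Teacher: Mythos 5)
Your proof is correct, and its first half is essentially identical to the paper's argument: the paper also introduces the exponentially tilted measure (which it calls $\lambda$, your $q_X$), rewrites $D_{KL}[p\|q] - (\mathbb E_p[X] - \ln \mathbb E_q[\exp X])$ as $D_{KL}[p\|\lambda]$, and concludes nonnegativity. The substantive difference is that the paper's written proof stops there, i.e.\ it only establishes $D_{KL}[p\|q] \ge \sup_X\{\cdots\}$; the reverse inequality (attainment) is left implicit, presumably via the observation that $\lambda = p$ when $\exp X \propto dp/dq$. You supply that missing half explicitly, and you handle the two points where it actually requires care: the a priori $p$-integrability of $(\ln(dp/dq))^-$ via $\ln u \le u - 1$, which makes $\mathbb E_p[X^\star]$ well-defined, and the case $D_{KL}[p\|q]=\infty$, where $X^\star$ is not admissible and must be approached by the truncations $X_n = \min\{X^\star, n\}$ with monotone convergence. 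So your write-up is strictly more complete than the paper's; the paper's version is shorter but, read literally, proves only one of the two inequalities asserted in the lemma.
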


\begin{proof}
    The provided proof is a generalization of Theorem 3.2 by \citeauthor{gray2011entropy} \citeyearpar{gray2011entropy} from discrete spaces to arbitrary probability spaces.
    Since \(p \ll q\), there exists a Radon-Nykodym derivative \(\tfrac{d p}{dq}(\omega)\), i.e., it holds $\textstyle p(\mathcal A) = \int_{\mathcal A} \tfrac{d p}{dq}(\omega) dq(\omega)$ for a function $\tfrac{d p}{dq}(\omega)$ uniquely defined up to a set of $q$-measure zero. Now, let \(X:\Omega \to \mathbb R\) be any random variable such that \(\mathbb E_{p} [X]\) and \(\mathbb E_{q} [\exp X]\) are well-defined. Then 
    \begin{align*}
        D_{KL}[p||q] - (\mathbb E_p [X] - \ln(\mathbb E_q [\exp X]))
        = & \mathbb E_p [\ln \frac{dp}{dq}(\omega) ] - \mathbb E_p [\ln\frac{\exp X}{\mathbb E_q [\exp X]}]\\
        = & \mathbb E_p [\ln \left(\frac{d p}{d q} (\omega) \frac{\mathbb E_q [\exp X]}{\exp (X)}\right)]\\
        = & \mathbb E_p [\ln \frac{d p}{d \lambda}] = D_{KL}[p || \lambda ]\geq 0,
    \end{align*}
    where we defined the probability measure $\textstyle \lambda(\mathcal A) = \int_{\mathcal A} \exp (X(\omega)) / \mathbb E_q [\exp X] dq(\omega)$, substituted $\tfrac{\mathbb E_q [\exp X]}{\exp (X)} = 1/ \tfrac{d\lambda}{dq} = \tfrac{dq}{d\lambda}$, and simplified the derivatives.
\end{proof}

\begin{lem}[Conditioned KL-divergence]\label{lem:conditioned_KL_divergence}
    Consider the probability space \((\Omega, \Sigma, \mathbb P)\) and an event \(\mathcal B \in \Sigma\) of non-zero probability, i.e. \(\mathbb P[\mathcal B] > 0\). Then
    \begin{equation*}
        D_{KL}[\mathbb P[\ \cdot\ | \mathcal B]\ ||\ \mathbb P] = - \ln \mathbb P[\mathcal{B}].
    \end{equation*}
\end{lem}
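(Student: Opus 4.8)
The plan is to compute the Radon--Nikodym derivative of the conditional measure explicitly and substitute it into the definition of the KL divergence. Write $\mathbb{Q} := \mathbb{P}[\,\cdot\,|\mathcal{B}]$, so that by definition $\mathbb{Q}(\mathcal{A}) = \mathbb{P}(\mathcal{A}\cap\mathcal{B})/\mathbb{P}[\mathcal{B}]$ for every $\mathcal{A}\in\Sigma$; note $\mathbb{P}[\mathcal{B}]>0$ is exactly what makes this well-defined. First I would check absolute continuity $\mathbb{Q}\ll\mathbb{P}$: if $\mathbb{P}(\mathcal{A}) = 0$ then $\mathbb{P}(\mathcal{A}\cap\mathcal{B})\le\mathbb{P}(\mathcal{A}) = 0$, hence $\mathbb{Q}(\mathcal{A}) = 0$, so the divergence is meaningful. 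Next I claim $\tfrac{d\mathbb{Q}}{d\mathbb{P}} = \mathds{1}_{\mathcal{B}}/\mathbb{P}[\mathcal{B}]$, which follows because $\int_{\mathcal{A}} (\mathds{1}_{\mathcal{B}}/\mathbb{P}[\mathcal{B}])\, d\mathbb{P} = \mathbb{P}(\mathcal{A}\cap\mathcal{B})/\mathbb{P}[\mathcal{B}] = \mathbb{Q}(\mathcal{A})$ for all $\mathcal{A}\in\Sigma$, and the density is $\mathbb{P}$-a.e. unique.

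Then I would plug this into the definition
\[
D_{KL}[\mathbb{Q}\,\|\,\mathbb{P}] = \int_\Omega \ln\!\Big(\tfrac{d\mathbb{Q}}{d\mathbb{P}}(\omega)\Big)\, d\mathbb{Q}(\omega) = \int_\Omega \ln\!\Big(\tfrac{\mathds{1}_{\mathcal{B}}(\omega)}{\mathbb{P}[\mathcal{B}]}\Big)\, d\mathbb{Q}(\omega).
\]
On $\mathcal{B}$ the integrand equals the constant $-\ln\mathbb{P}[\mathcal{B}]$, while on $\mathcal{B}^c$ the density vanishes and, since $\mathbb{Q}(\mathcal{B}^c) = 0$, that region contributes nothing (under the standard convention that the density-weighted integrand is $0$ where the density is $0$). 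Hence $D_{KL}[\mathbb{Q}\,\|\,\mathbb{P}] = -\ln\mathbb{P}[\mathcal{B}]\cdot\mathbb{Q}(\mathcal{B}) = -\ln\mathbb{P}[\mathcal{B}]$, as claimed.

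As a cross-check, the Donsker--Varadhan representation of Lemma~\ref{lem:variational_form_of_KL_divergence} recovers the lower bound: taking $X := \ln(\mathds{1}_{\mathcal{B}}/\mathbb{P}[\mathcal{B}])$ gives $\mathbb{E}_{\mathbb{P}}[\exp X] = 1$ and $\mathbb{E}_{\mathbb{Q}}[X] = -\ln\mathbb{P}[\mathcal{B}]$, so $D_{KL}[\mathbb{Q}\,\|\,\mathbb{P}]\ge -\ln\mathbb{P}[\mathcal{B}]$; this $X$ is precisely $\ln\tfrac{d\mathbb{Q}}{d\mathbb{P}}$, i.e.\ the maximizer in the variational problem, which is what the direct computation above confirms. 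There is no substantive obstacle here; the only step I would be careful with is the bookkeeping around the indicator, namely verifying that the set where the density vanishes is genuinely $\mathbb{Q}$-null so that the $\ln 0$ there never enters the integral, and stating the $0\log 0 = 0$ convention that keeps the KL integral well-posed.
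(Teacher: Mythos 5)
Your proposal is correct and follows essentially the same route as the paper's proof: identify the Radon--Nikodym derivative $\mathds{1}_{\mathcal B}/\mathbb P[\mathcal B]$, verify absolute continuity, and substitute into the definition of the KL divergence so that the integrand is the constant $-\ln\mathbb P[\mathcal B]$ on $\mathcal B$ and contributes nothing on $\mathcal B^c$. The Donsker--Varadhan cross-check is a nice additional sanity check but does not change the argument.
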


\begin{proof} The proof follows \citeauthor{menet2025lite} \citeyearpar{menet2025lite}. According to the definition of conditional expectation it holds
\begin{equation*}
    \mathbb P[\mathcal A\ | \mathcal B] = \frac{\mathbb P[\mathcal A \cap \mathcal B]}{\mathbb P[\mathcal B]} = \int_{\mathcal A} \frac{\mathds{1}_{\omega \in \mathcal B}}{\mathbb P[\mathcal B]}\ d\mathbb P(\omega) \qquad \forall \mathcal A \in \Sigma,
\end{equation*}
where we recognize absolute continuity \(\mathbb P[\ \cdot\ | \mathcal B] \ll \mathbb P\) and identify the Radon-Nykodym derivative $\tfrac{d \mathbb P[\ \cdot\ |\mathcal B]}{d \mathbb P}(\omega) = \tfrac{\mathds{1}_{\omega \in \mathcal B}}{\mathbb P[\mathcal B]}$. Plugging the derivative into the definition of Kullback-Leibler divergence results in the desired term: 
\begin{align*}
    D_{KL}[\mathbb P[\ \cdot\ | \mathcal B]\ ||\ \mathbb P] := & \int_\Omega \ln \frac{d \mathbb P[\ \cdot\  |\mathcal B]}{d \mathbb P} d \mathbb P[\ \cdot\ | \mathcal B]\\
    = & \int_\Omega \ln (\frac{d \mathbb P[\ \cdot\  |\mathcal B]}{d \mathbb P}) \frac{d \mathbb P[\ \cdot\  |\mathcal B]}{d \mathbb P} d \mathbb P\\
    = & \int_\Omega \ln (\frac{\mathds{1}_{\omega \in \mathcal B}}{\mathbb P[\mathcal B]}) \frac{\mathds{1}_{\omega \in \mathcal B}}{\mathbb P[\mathcal B]} d \mathbb P\\
    = & - \ln \mathbb P[\mathcal B].
\end{align*}
\end{proof}

\begin{defn}\label{def:cramer_rate_function}
    The (Cramér) rate function $\Lambda^*$ of a random variable $X: \Omega \to \mathbb R$ is defined as the convex conjugate of the cumulant generating function $\Lambda$, i.e., 
    \begin{equation*}
        \Lambda^*(\alpha) := \sup_{\beta > 0} \alpha \beta - \Lambda(\beta) \text{ where } \Lambda(\beta) := \ln \mathbb E[e^{\beta (X-\mathbb E[X])}].
    \end{equation*}
\end{defn}

\begin{lem}[Upper bound on conditional expectation]\label{lem:upper_bound_conditional_expectation}

    Let \(X:\Omega \to \mathbb R\) be a random variable on \((\Omega, \Sigma, \mathbb P)\) such that the (restricted) cumulant generating function $\Lambda: \mathbb R^+ \to [0,\infty)\quad \beta \mapsto \ln \mathbb E [\exp(\beta(X-\mathbb E [X]))]$ exists. Assume further that \(\mathbb P[\mathcal B] > 0\) such that \(\mathbb P[\cdot | \mathcal B]\) is well-defined. Then with \(\Lambda^*\) the Cramer rate function of $X$ (see Definition~\ref{def:cramer_rate_function}) it holds that
    \begin{equation*}
        \mathbb E[X | \mathcal B] \leq \mathbb E [X] + (\Lambda^*)^{\text{-}1} (- \ln \mathbb P[\mathcal B]).
    \end{equation*}
    
\end{lem}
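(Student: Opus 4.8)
The plan is to combine the Donsker--Varadhan variational representation of the KL divergence (Lemma~\ref{lem:variational_form_of_KL_divergence}) with the closed form for the conditioned KL divergence (Lemma~\ref{lem:conditioned_KL_divergence}), and then optimize over a positive scaling parameter so that the left-hand side collapses to the Cramér rate function of Definition~\ref{def:cramer_rate_function}. The two cited lemmas are used essentially as black boxes; the only genuine work is the final inversion of $\Lambda^*$.

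Concretely, I would first fix $\beta > 0$ and apply Lemma~\ref{lem:variational_form_of_KL_divergence} with $p = \mathbb P[\,\cdot\,|\mathcal B]$, $q = \mathbb P$ (the absolute continuity $\mathbb P[\,\cdot\,|\mathcal B] \ll \mathbb P$ is exactly what was verified in the proof of Lemma~\ref{lem:conditioned_KL_divergence}) and the test function $\omega \mapsto \beta\big(X(\omega) - \mathbb E[X]\big)$. Since the Donsker--Varadhan supremum dominates this particular choice and $\ln \mathbb E_{\mathbb P}[\exp(\beta(X - \mathbb E[X]))] = \Lambda(\beta)$,
\[
  D_{KL}\big[\mathbb P[\,\cdot\,|\mathcal B]\ \|\ \mathbb P\big] \ \geq\ \beta\big(\mathbb E[X\mid\mathcal B] - \mathbb E[X]\big) - \Lambda(\beta),
\]
where $\mathbb E[X\mid\mathcal B]$ is finite because its Radon--Nikodym derivative $\mathds 1_{\mathcal B}/\mathbb P[\mathcal B]$ with respect to $\mathbb P$ is bounded and $\mathbb E[X]$ is finite under the standing assumption that $\Lambda$ exists on $\mathbb R^+$. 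Substituting $D_{KL}[\mathbb P[\,\cdot\,|\mathcal B]\ \|\ \mathbb P] = -\ln\mathbb P[\mathcal B]$ from Lemma~\ref{lem:conditioned_KL_divergence} and rearranging gives, writing $a := \mathbb E[X\mid\mathcal B] - \mathbb E[X]$,
\[
  \beta a - \Lambda(\beta)\ \leq\ -\ln\mathbb P[\mathcal B] \qquad\text{for every } \beta > 0 .
\]
Taking the supremum over $\beta > 0$ on the left turns it into $\Lambda^*(a)$ by Definition~\ref{def:cramer_rate_function}, so $\Lambda^*(a) \leq -\ln\mathbb P[\mathcal B]$.

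To finish, I would invert $\Lambda^*$ on $[0,\infty)$. By Jensen, $\Lambda(\beta) \geq 0$ for all $\beta > 0$, so $\alpha\beta - \Lambda(\beta) \leq 0$ at $\alpha = 0$ and the supremum is attained in the limit $\beta \to 0^+$, giving $\Lambda^*(0) = 0$; moreover $\Lambda^*$ is convex and, for $0 \leq \alpha \leq \alpha'$ and each $\beta > 0$, $\alpha\beta - \Lambda(\beta) \leq \alpha'\beta - \Lambda(\beta)$, hence $\Lambda^*$ is non-decreasing on $[0,\infty)$. Its generalized inverse $(\Lambda^*)^{-1}$ on $[0,\infty)$ is then also non-decreasing and non-negative, and applying it to $\Lambda^*(a) \leq -\ln\mathbb P[\mathcal B]$ yields $a \leq (\Lambda^*)^{-1}(-\ln\mathbb P[\mathcal B])$ when $a \geq 0$; when $a < 0$ the claimed bound is trivial because the right-hand side is non-negative. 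In either case $\mathbb E[X\mid\mathcal B] \leq \mathbb E[X] + (\Lambda^*)^{-1}(-\ln\mathbb P[\mathcal B])$, as desired.

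The hard part will be this last step: making the ``apply $(\Lambda^*)^{-1}$'' move fully rigorous — specifying $(\Lambda^*)^{-1}$ as a generalized (e.g.\ right-continuous) inverse, confirming non-negativity and monotonicity of $\Lambda^*$ on the relevant half-line solely from finiteness of $\Lambda$ on $\mathbb R^+$, and disposing of the degenerate case $a < 0$. Everything preceding it is a direct chaining of Lemmas~\ref{lem:variational_form_of_KL_divergence} and~\ref{lem:conditioned_KL_divergence} together with the definition of the rate function.
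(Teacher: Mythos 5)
Your proposal is correct and follows essentially the same route as the paper's proof: restrict the Donsker--Varadhan supremum to the family $\{\beta(X-\mathbb E[X])\}_{\beta>0}$ to lower-bound the KL divergence by $\Lambda^*(\mathbb E[X\mid\mathcal B]-\mathbb E[X])$, substitute $D_{KL}[\mathbb P[\cdot\mid\mathcal B]\,\|\,\mathbb P]=-\ln\mathbb P[\mathcal B]$, and invert $\Lambda^*$. Your treatment of the inversion step (monotonicity of $\Lambda^*$ on $[0,\infty)$ and the degenerate case $\mathbb E[X\mid\mathcal B]<\mathbb E[X]$) is in fact more careful than the paper's, which simply asserts that $\Lambda^*$ is strictly increasing.
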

\begin{proof}
    The provided proof is an adaptation of Lemma 11 by \citeauthor{tarbouriech2024probabilistic} \citeyearpar{tarbouriech2024probabilistic}. We apply Lemma~\ref{lem:variational_form_of_KL_divergence} with \(p(\mathcal E) = \mathbb P[\mathcal E| \mathcal B]\) and \(q(\mathcal E) = \mathbb P[\mathcal E]\), and restrict the supremum over just the random variables \(\{\lambda(X-\mathbb E X)\}_{\lambda \in \mathbb R_+}\). This gives
    \begin{align*}
        D_{KL}[\mathbb P[\cdot| \mathcal B] || \mathbb P]
        \geq & \sup_{\lambda \in \mathbb R_+} \{ \lambda \mathbb E [X-\mathbb E[X] \ |\ \mathcal B] - \ln \mathbb E [\exp(\lambda(X-\mathbb E X))]\}\\
         = & \ \sup \{ \lambda (\mathbb E [X | \mathcal B] - \mathbb E[X]) - \Lambda(\lambda) : \lambda \in \mathbb R_+\}\\
         = & \ \Lambda^*(\mathbb E [X | \mathcal B] - \mathbb E[X])
    \end{align*}
    Furthermore, since \(\lambda \in \mathbb R^+\) it follows that \(\Lambda^*\) is strictly increasing and thus admits a strictly increasing inverse, i.e.,
    \begin{equation*}
        (\Lambda^*)^{\text{-}1}(D_{KL}[\mathbb P[\cdot| \mathcal B]\ ||\ \mathbb P]) \geq \mathbb E [X | \mathcal B] - \mathbb E[X].
    \end{equation*}
    As a final step, we use Lemma~\ref{lem:conditioned_KL_divergence}, which states that $D_{KL}[\mathbb P[\cdot | \mathcal B]\ ||\ \mathbb P] = - \ln \mathbb P[\mathcal B]$.
\end{proof}

\begin{cor}[Upper bound on Gaussian conditional expectation]\label{cor:gaussian_upper_bound_conditional_expectation}
    Let $X\sim \mathcal N(\mu, \sigma^2)$ and \(\mathbb P[\mathcal B] > 0\). Then
    \begin{equation*}
        \mathbb E[X | \mathcal B] \leq \mu + \sigma \sqrt{-2 \ln \mathbb P[\mathcal B]}.
    \end{equation*}
\end{cor}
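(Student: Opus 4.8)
This is a direct specialization of Lemma~\ref{lem:upper_bound_conditional_expectation} to the Gaussian case, so the entire task reduces to computing the Cramér rate function of a Gaussian and inverting it. First I would invoke Lemma~\ref{lem:upper_bound_conditional_expectation}, which (once we check its hypothesis that the restricted cumulant generating function exists) immediately gives
\[
    \mathbb E[X \mid \mathcal B] \leq \mu + (\Lambda^*)^{\text{-}1}\big( - \ln \mathbb P[\mathcal B] \big),
\]
where $\Lambda^*$ is the Cramér rate function of $X$ from Definition~\ref{def:cramer_rate_function}. Note $-\ln\mathbb P[\mathcal B] \geq 0$ since $\mathbb P[\mathcal B] \in (0,1]$, so the argument lies in the domain where the inverse is taken.

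\textbf{Computing the rate function.} For $X \sim \mathcal N(\mu,\sigma^2)$ the moment generating function is classical: $\mathbb E[e^{\beta(X-\mu)}] = e^{\sigma^2\beta^2/2}$ for all $\beta \in \mathbb R$, so $\Lambda(\beta) = \tfrac{\sigma^2\beta^2}{2}$, which in particular shows the restricted CGF exists, discharging the hypothesis of Lemma~\ref{lem:upper_bound_conditional_expectation}. Then for $\alpha \geq 0$,
\[
    \Lambda^*(\alpha) = \sup_{\beta > 0}\Big( \alpha\beta - \tfrac{\sigma^2\beta^2}{2} \Big),
\]
a concave quadratic in $\beta$ maximized at $\beta^* = \alpha/\sigma^2 > 0$ (assuming $\sigma > 0$), giving $\Lambda^*(\alpha) = \tfrac{\alpha^2}{2\sigma^2}$. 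Restricted to $\alpha \geq 0$ this is strictly increasing with inverse $(\Lambda^*)^{\text{-}1}(u) = \sigma\sqrt{2u}$ for $u \geq 0$. Substituting $u = -\ln\mathbb P[\mathcal B]$ yields $\mathbb E[X\mid\mathcal B] \leq \mu + \sigma\sqrt{-2\ln\mathbb P[\mathcal B]}$, as claimed. The degenerate case $\sigma = 0$ is trivial since then $X = \mu$ a.s. and the right-hand side is $\mu$.

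\textbf{Main obstacle.} There is essentially no obstacle here beyond routine bookkeeping; the one point requiring a sentence of care is the restriction to $\beta > 0$ in the definition of $\Lambda^*$. Because we only ever evaluate $(\Lambda^*)^{\text{-}1}$ at the nonnegative value $-\ln\mathbb P[\mathcal B]$, and $\Lambda^*$ is strictly increasing on $[0,\infty)$ with the explicit form above, no issue arises; for negative arguments (which do not occur) the one-sided supremum would merely give $0$, but this case is never invoked.
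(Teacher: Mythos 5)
Your proposal is correct and follows exactly the paper's route: invoke Lemma~\ref{lem:upper_bound_conditional_expectation}, compute the Gaussian cumulant generating function $\Lambda(\beta)=\sigma^2\beta^2/2$ and rate function $\Lambda^*(\alpha)=\alpha^2/(2\sigma^2)$, and invert to get $(\Lambda^*)^{\text{-}1}(t)=\sigma\sqrt{2t}$. Your additional remarks on the $\beta>0$ restriction and the degenerate $\sigma=0$ case are fine but not needed beyond what the paper states.
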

\begin{proof}
    Since $X \sim \mathcal N(\mu, \sigma^2)$, it has a cumulant generating function $\Lambda(\beta) = \sigma^2 \beta^2 / 2$ and a (Cramér) rate function $\Lambda^*(s) = s^2 / (2\sigma^2)$ with inverse $(\Lambda^*)^{\text{-}1}(t) = \sigma \sqrt{2 t}$. Plugging into Lemma~\ref{lem:upper_bound_conditional_expectation} results in the desired expression.
\end{proof}

\begin{cor}\label{cor:gaussian_expected_maximum_upper_bound}
    Let $R \sim \mathcal N(\mu, K)$ with $\sigma_x := \sqrt{K_{x,x}}$. Define $R^* := \max_x R_x$ and $p^* := \mathbb P[R_x = R^*]$. Then
    \begin{equation*}
        \mathbb E[R^*] \leq \mathcal V(p^*) \leq \max_{p\in \Delta^{|X|-1}} \mathcal V(p).
    \end{equation*}
\end{cor}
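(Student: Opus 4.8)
The plan is to establish the two inequalities separately. The right-hand inequality $\mathcal V(p^*) \le \max_{p \in \Delta^{|X|-1}} \mathcal V(p)$ is essentially free: I would first check that the vector $p^*$ with entries $p^*_x = \mathbb P[R_x = R^*]$ lies in the simplex $\Delta^{|X|-1}$. Its entries are non-negative, and they sum to one because the events $\mathcal B_x := \{R_x = R^*\}$ cover the sample space; for a non-degenerate Gaussian ties happen with probability zero, and otherwise one fixes an arbitrary tie-breaking rule so that the $\mathcal B_x$ partition $\Omega$ up to a null set. Once $p^* \in \Delta^{|X|-1}$, the inequality is immediate.

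For the left-hand inequality, the idea is to slice $\mathbb E[R^*]$ according to which arm achieves the maximum. Using the law of total expectation over the partition $\{\mathcal B_x\}$ and the fact that $R^* = R_x$ on $\mathcal B_x$, I would write $\mathbb E[R^*] = \sum_x p^*_x\, \mathbb E[R_x \mid \mathcal B_x]$, summing only over $x$ with $p^*_x > 0$. Then, since each $R_x \sim \mathcal N(\mu_x, \sigma_x^2)$ with $\sigma_x^2 = K_{x,x}$ and $\mathbb P[\mathcal B_x] = p^*_x > 0$, Corollary~\ref{cor:gaussian_upper_bound_conditional_expectation} applies directly with $X = R_x$ and $\mathcal B = \mathcal B_x$, yielding $\mathbb E[R_x \mid \mathcal B_x] \le \mu_x + \sigma_x \sqrt{-2 \ln p^*_x}$. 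Substituting and recognising the right-hand side as $\sum_x p^*_x\big(\mu_x + \sigma_x\sqrt{2\ln(1/p^*_x)}\big) = \mathcal V(p^*)$ finishes the argument; terms with $p^*_x = 0$ are dropped consistently on both sides via the convention $0 \cdot \sqrt{-2\ln 0} = 0$.

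The main thing to be careful about — rather than a genuine obstacle — is ensuring the $\mathcal B_x$ form a bona fide partition so that the total-expectation step is valid and $\sum_x p^*_x = 1$; this is handled by positive-definiteness of $K$ or a tie-breaking convention and does not alter the values of $\mathbb E[R^*]$ or $\mathcal V(p^*)$. Beyond that, the corollary is a one-line consequence of the Gaussian conditional-expectation bound of Corollary~\ref{cor:gaussian_upper_bound_conditional_expectation}, applied once per arm and averaged with weights $p^*_x$.
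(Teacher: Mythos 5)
Your proof is correct and follows essentially the same route as the paper's: decompose $\mathbb E[R^*]$ via the law of total expectation over which arm attains the maximum, apply Corollary~\ref{cor:gaussian_upper_bound_conditional_expectation} to each term with $\mathcal B = \{R_x = R^*\}$, and recognise the resulting sum as $\mathcal V(p^*)$. Your added care about ties and the $p^*_x = 0$ convention is a reasonable tightening of details the paper leaves implicit, but it does not change the argument.
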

\begin{proof}
    The proof is a direct consequence of Corollary~\ref{cor:gaussian_upper_bound_conditional_expectation}:
    \begin{equation*}
        \textstyle \mathbb E[R^*] = \textstyle\sum_x p^*_x\ \mathbb E[R_x | R_x = R^*] \leq \textstyle\sum_x p^*_x(\mu_x + \sigma_x \sqrt{-2 \ln p^*_x}).
    \end{equation*}
\end{proof}

\begin{lem}\label{lem:information_gain_for_gaussians}
    Let $R:\Omega \to \mathbb R^{|X|}$ be a multivariate Gaussian with $\sigma_{R_x} \leq 1\ \forall x \in X$ that is consecutively evaluated at locations $L^T := (x^t)_{t=1}^T$ with observations $Y_x = R_x + \epsilon_x$. The independent noise is distributed as $\epsilon_x \sim \mathcal N(0, \sigma^2_n(x))$ for $\sigma_n(x) \leq \sigma_n\ \forall x \in X$. Then the maximum information gain $\gamma^T$ upper bounds the aggregated predictive variances at the locations, i.e.,
    \begin{equation*}
        \gamma^T:= \max_{L^T} I(Y_{L^T}; R) \geq 2 \underbrace{\tfrac{\ln(1+\sigma_n^{\text{-}2})}{4}}_{=: 1/C_{\sigma_n}} \sum_{t=1}^T \sigma_{R_{x^t}^{t-1}}^2.
    \end{equation*}
\end{lem}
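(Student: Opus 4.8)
The plan is to reuse the classical information-gain calculation of \citeauthor{srinivas2009gaussian} \citeyearpar{srinivas2009gaussian}. Since $\gamma^T := \max_{L^T} I(Y_{L^T}; R)$ is a maximum over all length-$T$ location sequences, it suffices to lower bound $I(Y_{L^T}; R)$ for the particular sequence $L^T = (x^t)_{t=1}^T$ appearing on the right-hand side (which we treat as fixed), and then invoke $\gamma^T \ge I(Y_{L^T}; R)$. The first step is the chain rule for mutual information,
\[
  I(Y_{L^T}; R) \;=\; \sum_{t=1}^T I\bigl(Y_{x^t}; R \,\big|\, Y_{x^1},\ldots,Y_{x^{t-1}}\bigr),
\]
which isolates the per-step contribution of each new observation.

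Next I would evaluate each conditional term using joint Gaussianity. Let $\sigma_{R_{x^t}}^2 := \mathrm{Var}\bigl(R_{x^t}\mid Y_{x^1},\ldots,Y_{x^{t-1}}\bigr)$ denote the posterior variance at $x^t$ given the first $t-1$ observations. Conditioned on the past observations, $Y_{x^t}$ is Gaussian with variance $\sigma_{R_{x^t}}^2 + \sigma_n^2(x^t)$; additionally conditioning on $R$ removes the signal term, leaving variance $\sigma_n^2(x^t)$, since the observation noise is independent of $R$ and of the past. Writing the conditional mutual information as a difference of differential entropies and cancelling the $\tfrac12\ln(2\pi e)$ terms yields
\[
  I\bigl(Y_{x^t}; R \,\big|\, Y_{x^1},\ldots,Y_{x^{t-1}}\bigr) \;=\; \tfrac12\ln\!\bigl(1 + \sigma_n^{-2}(x^t)\,\sigma_{R_{x^t}}^2\bigr).
\]

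The key estimate is then purely scalar. For fixed $c := \sigma_n^{-2}(x^t) > 0$ the map $u \mapsto \ln(1+cu)$ is concave on $[0,\infty)$, vanishes at $u = 0$, and equals $\ln(1+c)$ at $u = 1$; since conditioning never increases variance and the prior obeys $\sigma_{R_x} \le 1$, we have $\sigma_{R_{x^t}}^2 \in [0,1]$, so concavity (the chord lies below the curve) gives $\ln(1 + c\,\sigma_{R_{x^t}}^2) \ge \sigma_{R_{x^t}}^2\,\ln(1+c)$. Combining this with the monotonicity fact that $\sigma_n(x^t) \le \sigma_n$ implies $\ln(1+\sigma_n^{-2}(x^t)) \ge \ln(1+\sigma_n^{-2})$, each per-step term is at least $\tfrac{\ln(1+\sigma_n^{-2})}{2}\,\sigma_{R_{x^t}}^2$. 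Summing over $t$ and using $\gamma^T \ge I(Y_{L^T}; R)$ gives $\gamma^T \ge \tfrac{\ln(1+\sigma_n^{-2})}{2}\sum_{t=1}^T \sigma_{R_{x^t}}^2 = 2\cdot\tfrac{\ln(1+\sigma_n^{-2})}{4}\sum_{t=1}^T\sigma_{R_{x^t}}^2$, as claimed.

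I expect no deep obstacle here: the work is assembling standard facts and watching the direction of each inequality (larger noise precision $\Rightarrow$ larger log term, conditioning $\Rightarrow$ smaller variance, concavity $\Rightarrow$ chord below curve). The one conceptual point worth a sentence is that when this lemma is applied inside Theorem~\ref{thm:regret_bound_of_approximate_thompson_sampling} the locations $x^t$ are chosen adaptively from the random history $\mathcal H^t$; this is harmless because for a Gaussian process the posterior variance at a point depends only on \emph{where} past observations were made and not on their values, so the bound above holds pathwise for every realized location sequence and is preserved under the outer expectation, which is precisely what licenses replacing $\sum_t \mathbb E[\mathbb E[2\sigma_{x^t}^2\mid\mathcal H^t]]$ by $C_{\sigma_n}\gamma^T$ in the theorem's proof.
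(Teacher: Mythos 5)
Your proof is correct and follows essentially the same route as the paper's: both reduce $I(Y_{L^T};R)$ to $\tfrac12\sum_t \ln\bigl(1+\sigma_n^{-2}(x^t)\,\sigma_{R_{x^t}}^2\bigr)$ (you via the chain rule for mutual information, the paper via telescoping the entropy $H[Y_{L^T}]$, which is the same computation) and then apply the identical concavity-of-$\ln(1+cu)$ chord argument on $[0,1]$ together with monotonicity in the noise level. Your closing remark about the bound holding pathwise because posterior variances depend only on observation locations mirrors the paper's opening observation and correctly justifies the lemma's use inside Theorem~\ref{thm:regret_bound_of_approximate_thompson_sampling}.
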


\begin{proof}
    The proof generalizes that of \citeauthor{srinivas2009gaussian} \citeyearpar{srinivas2009gaussian} from homoscedastic to heteroscedastic noise. First, note that the expression on the right hand side is well-defined, because $\sigma_{R_{x^t}^{t-1}}$ only depends on the observation locations $L^T$, but not on the observed value.
    Next, note that $Y_{L^T} | R$ is a multivariate normal with independent components of variance $\sigma_n^2(x^1), \ldots, \sigma_n^2(x^T)$, thus
    \begin{align*}
        I(Y_{L^T}; R) & = H[Y_{L^T}] - H[Y_{L^T} | R]\\
        & \textstyle = H[Y_{L^T}] - \tfrac{1}{2} \sum_{t=1}^T \ln (2 \pi e \sigma^2_n(x^t)).
    \end{align*}
    Furthermore, one may decompose 
    \begin{align*}
        H[Y_{L^T}] & = H[Y_{L^{T\text{-}1}}] + H[Y_{x^T}|Y_{L^{T\text{-}1}}]\\
        & = H[Y_{L^{T\text{-}1}}] + \tfrac{1}{2}\ln(2\pi e (\sigma^2_n(x^T) + \sigma_{R_{x^T}^{T\text{-}1}}^2)),
    \end{align*}
    using that $Y_{x^T} | Y_{L^{T\text{-}1}}$ is Gaussian with variance $\sigma^2_n(x^T) + \sigma_{R_{x^T}^{T\text{-}1}}^2$. Recursively expanding then results in 
    $$\textstyle I(Y_{L^T}; R) = \frac{1}{2} \sum_{t=1}^T \ln(1+\sigma^{\text{-}2}_n(x^t) \sigma_{R_{x^t}^{t\text{-}1}}^2).$$
    Finally, by assumption $\sigma_{R_{x^t}^{t\text{-}1}} \in [0,1]$, allowing to lower bound each summand
    \begin{align*}
        \sigma_{R_{x^t}^{t\text{-}1}}^2 \leq  \tfrac{1}{2}\ln(1+\sigma^{\text{-}2}_n(x^{t})\sigma_{R_{x^t}^{t\text{-}1}}^2) \tfrac{1}{2}\underbrace{\tfrac{4}{\ln(1+\sigma^{\text{-}2}_n(x^{t}))}}_{=:C_{\sigma_n(x^{t})}}.
    \end{align*}
    Indeed, the inequality is tight for $\sigma_{R_{x^t}^{t\text{-}1}} \in \{0,1\}$ and holds in-between due to concavity of $$\sigma_{R_{x^t}^{t\text{-}1}}^2 \mapsto \ln(1+\sigma^{\text{-}2}_n(x^t) \sigma_{R_{x^t}^{t\text{-}1}}^2) - \sigma_{R_{x^t}^{t\text{-}1}}^2 \ln(1+\sigma^{\text{-}2}_n(x^t)).$$
\end{proof}

\begin{prop}
    Consider the `Reinforce Leave One Out` \citep{kool2019buy} advantage function given by $\textstyle \hat{r}_i - \tfrac{1}{B-1}\sum\nolimits_{j \not = i} \hat{r}_j$ for i.i.d. samples $\hat{r}_j$ with $j \in \{1,\ldots, B\}$. Then the expected advantage is zero and an unbiased estimator of the variance is $\textstyle \tfrac{1}{B}\sum\nolimits_{h}(\hat{r}_h -\tfrac{1}{B-1}\sum\nolimits_{l\not=h} \hat{r}_l)^2$. Moreover, standardizing RLOO results precisely in the advantage function employed by \citeauthor{shao2024deepseekmath} \citeyearpar{shao2024deepseekmath} for `Group Relative Policy Optimization` (GRPO):
    \begin{equation*}
        \textstyle \underbrace{\frac{\hat{r}_i - \tfrac{1}{B-1}\sum\nolimits_{j \not = i} \hat{r}_j}{\sqrt{\tfrac{1}{B}\sum_{h}(\hat{r}_h -\tfrac{1}{B-1}\sum_{l\not=h} \hat{r}_l)^2}}}_{\text{standardized RLOO}}
        = \underbrace{\frac{\hat{r}_i - \tfrac{1}{B}\sum\nolimits_{j} \hat{r}_j}{\sqrt{\tfrac{1}{B}\sum_{h}(\hat{r}_h -\tfrac{1}{B}\sum_{l} \hat{r}_l)^2}}}_{\text{GRPO}}
    \end{equation*}
\end{prop}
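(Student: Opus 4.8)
The plan is to reduce all three assertions to a single algebraic identity that rewrites the leave-one-out average in terms of the full-batch average $\bar r := \tfrac1B\sum_{j=1}^B \hat r_j$. Since $\sum_{j\neq i}\hat r_j = B\bar r - \hat r_i$, a one-line computation gives
\begin{equation*}
  \hat r_i - \frac{1}{B-1}\sum_{j\neq i}\hat r_j \;=\; \hat r_i - \frac{B\bar r - \hat r_i}{B-1} \;=\; \frac{B}{B-1}\,(\hat r_i - \bar r),
\end{equation*}
so the RLOO advantage is a fixed, index-independent scalar multiple $\tfrac{B}{B-1}$ of the centered value $\hat r_i - \bar r$. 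I would establish this first, as everything else is bookkeeping on top of it. In particular, because the $\hat r_j$ are i.i.d.\ they share a common mean, so $\mathbb E[\hat r_i - \bar r] = 0$, and hence the expected RLOO advantage equals $\tfrac{B}{B-1}\cdot 0 = 0$.

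For the unbiased-variance claim I would note that the joint law of $(\hat r_1,\dots,\hat r_B)$ is exchangeable, so the leave-one-out advantages $A_h := \hat r_h - \tfrac{1}{B-1}\sum_{l\neq h}\hat r_l$ are identically distributed; as each has mean zero, $\mathbb E[A_h^2] = \Var(A_1)$ for every $h$, and therefore $\mathbb E\big[\tfrac1B\sum_h A_h^2\big] = \Var(A_1)$. For completeness one can also derive the closed form: combining the first display with $\Var(\hat r_i - \bar r) = \tfrac{B-1}{B}\Var(\hat r_1)$ yields $\Var(A_1) = \tfrac{B}{B-1}\Var(\hat r_1)$, though this is not needed for the statement.

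For the main identity I would substitute the first display into both the numerator and the denominator of the standardized-RLOO expression. The numerator becomes $\tfrac{B}{B-1}(\hat r_i - \bar r)$, while $\tfrac1B\sum_h A_h^2 = \tfrac{B^2}{(B-1)^2}\cdot\tfrac1B\sum_h(\hat r_h - \bar r)^2$, so the denominator equals $\tfrac{B}{B-1}\sqrt{\tfrac1B\sum_h(\hat r_h - \bar r)^2}$. The common prefactor $\tfrac{B}{B-1}$ cancels, leaving exactly the GRPO advantage on the right-hand side. The proposition is elementary, so there is no real obstacle; the only points requiring care are that the prefactor $\tfrac{B}{B-1}$ in the leave-one-out rewriting does not depend on the index (so it survives in the numerator but is exactly matched by the same factor pulled out of the square root in the denominator), and that only exchangeability, not full independence, is used for the mean-zero and unbiasedness claims.
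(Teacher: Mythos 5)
Your proposal is correct and follows essentially the same route as the paper's proof: the central step in both is rewriting the leave-one-out advantage as $\tfrac{B}{B-1}(\hat r_i - \tfrac{1}{B}\sum_j \hat r_j)$ so that the prefactor cancels between numerator and denominator, with the mean-zero and unbiased-variance claims handled by linearity and identical distribution of the advantages. Your remark that exchangeability suffices for the first two claims is a harmless refinement, not a different argument.
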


\begin{proof}[Proof of Proposition~\ref{prop:srloo_equals_grpo}]
    With linearity of the expectation and $\hat{r}_i \overset{i.i.d.}{\sim} p(\hat{r})$, the expected advantage is
    \begin{equation*}
        \textstyle \mathbb E[\hat{r}_i - \tfrac{1}{B-1} \sum_{j \not=i} \hat{r}_j] = \mathbb E[\hat{r}] - \tfrac{1}{B-1} \sum_{j \not=i} \mathbb E[\hat{r}] = 0\quad \forall i.
    \end{equation*}
    From the same two properties, an unbiased estimator of the variance of the advantage also follows:
    \begin{align*}
        \textstyle \mathbb E[\tfrac{1}{B}\sum\nolimits_{h}(\hat{r}_h -\tfrac{1}{B-1}\sum\nolimits_{l\not=h} \hat{r}_l)^2]
        = & \textstyle \mathbb E[(\hat{r}_h -\tfrac{1}{B-1}\sum\nolimits_{l\not=h} \hat{r}_l)^2] - 0^2\\
        = & \textstyle \text{Var}[\hat{r}_i - \tfrac{1}{B-1} \sum_{j \not=i} \hat{r}_j].
    \end{align*}
    Finally, simple algebra results in equivalence between variance-regularized RLOO and GRPO:
    \begin{align*}
         \frac{\hat{r}_i - \tfrac{1}{B-1}\sum\nolimits_{j \not = i} \hat{r}_j}{\sqrt{\tfrac{1}{B}\sum_{h}(\hat{r}_h -\tfrac{1}{B-1}\sum_{l\not=h} \hat{r}_l)^2}}
         = & \frac{(B\hat{r}_i -\sum\nolimits_{j} \hat{r}_j)/(B-1)}{\sqrt{\tfrac{1}{B}\sum_{h}((B\hat{r}_h -\sum_{l} \hat{r}_l)/(B-1))^2}}\\
         = & \frac{\tfrac{B}{B-1}(\hat{r}_i - \tfrac{1}{B}\sum\nolimits_{j} \hat{r}_j)}{\sqrt{(\frac{B}{B-1})^2\tfrac{1}{B}\sum_{h}(\hat{r}_h -\tfrac{1}{B}\sum_{l} \hat{r}_l)^2}}\\
         = & \frac{\hat{r}_i - \tfrac{1}{B}\sum\nolimits_{j} \hat{r}_j}{\sqrt{\tfrac{1}{B}\sum_{h}(\hat{r}_h -\tfrac{1}{B}\sum_{l} \hat{r}_l)^2}}.
    \end{align*}
\end{proof}

\begin{prop}
    Let $Y \sim \mathcal N(\nu, \lambda^2 \Sigma)$ and $\mathds 1 = (1, \ldots, 1)^T \in \mathbb R^s$. Then for $y \in \mathbb R^s$, the (marginal) likelihood $p(Y = y)$ is maximized if
    \begin{align*}
        \nu^{opt} & = \tfrac{y^T \Sigma^{\text{-}1} \mathds 1}{\mathds 1^T \Sigma^{\text{-}1} \mathds 1} \text{ and } \lambda^{opt} = \sqrt{\tfrac{(y - \nu^{opt} \mathds 1)^T \Sigma^{\text{-}1}(y - \nu^{opt} \mathds 1)}{s}}.
    \end{align*}
\end{prop}

\begin{proof}[Proof of Proposition~\ref{prop:marginal_likelihood_maximization_closed_form}]
    For $y = \alpha \mathds 1$, we get $\nu^{opt} = \alpha$ and $\lambda^{opt} = 0$, i.e., $Y = \alpha \mathds{1}$ and $\mathbb P[Y = y] = 1$, which is clearly optimal. So, w.l.o.g. assume $y \not\propto \mathds 1$.
    Since $p(y)$ is a smooth function of $\nu \in \mathbb R, \lambda \in \mathbb R_+$, and because for any $y \not\propto \mathds 1$ it holds that $p(y) \to 0$ as $\nu \to \pm \infty$ or $\lambda \to \{0, \infty\}$, it suffices to consider the first-order conditions to find the maximizers. With Lemma~\ref{lem:marginal_likelihood_maximization} providing the gradients, the closed-form solutions follow swiftly. For $\nu$ we have
    \begin{align*}
        \tfrac{d}{d \nu} - 2 \ln p(y) & = - 2(y-\nu \mathds 1)^T \tfrac{1}{\lambda^2}\Sigma^{\text{-}1} \mathds{1} = 0.
    \end{align*}
    Likewise, for $\lambda$ we get
    \begin{align*}
    \tfrac{d}{d \lambda} - 2 \ln p(y)
    = & \tfrac{2}{\lambda}\mathrm{Tr}((\tfrac{1}{\lambda^2}\Sigma^{\text{-}1} - \tfrac{1}{\lambda^2}\Sigma^{\text{-}1} (y-\nu \mathds 1)(\tfrac{1}{\lambda^2}\Sigma^{\text{-}1} (y-\nu \mathds 1))^T) \lambda^2\Sigma)\\
    = & \tfrac{2}{\lambda}\mathrm{Tr}(I - \tfrac{1}{\lambda^2}\Sigma^{\text{-}1} (y-\nu \mathds 1)(y-\nu \mathds 1)^T)\\
    = & \tfrac{2}{\lambda}(s - (y-\nu \mathds 1)^T\Sigma^{\text{-}1} (y-\nu \mathds 1) / \lambda^2) = 0.
    \end{align*}
\end{proof}

\begin{lem}\label{lem:marginal_likelihood_maximization}
    Let $Y \sim \mathcal N(\mu_\phi, \Sigma_\psi)$ with $\phi = (\phi_1, \ldots, \phi_m)$ and $\psi = (\psi_1, \ldots, \psi_n)$. Then 
    \begin{align}
        \text{-} 2 \ln p(y) & = \ln | \Sigma_\psi | + (y \text{-} \mu_\phi)^T \Sigma_\psi^{\text{-}1} (y\text{-}\mu_\phi) + \text{const}.\label{eq:log_multivariate_normal_pdf}
    \end{align}
    Moreover, its gradients with respect to $\phi_i$ are
    \begin{align}
        -2 (y-\mu_\phi)^T \Sigma_\psi^{\text{-}1} \frac{d}{d\phi}_i \mu_\phi\label{eq:log_multivariate_normal_pdf_mean_derivative}
    \end{align}
    and  its gradients with respect to $\psi_i$ are 
    \begin{align}
        \mathrm{Tr}((\Sigma_\psi^{\text{-}1} - \Sigma_\psi^{\text{-}1} (y-\mu_\phi)(\Sigma_\psi^{\text{-}1} (y-\mu_\phi))^T) \frac{d}{d\psi_i} \Sigma_\psi).\label{eq:log_multivariate_normal_pdf_covariance_derivative}
    \end{align}
\end{lem}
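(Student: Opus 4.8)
The plan is to start from the explicit Gaussian density and then differentiate term by term using the standard determinant- and inverse-derivative identities. Writing $s := \dim(y)$, the density is $p(y) = (2\pi)^{-s/2}\, |\Sigma_\psi|^{-1/2}\, \exp(-\tfrac12 (y-\mu_\phi)^T \Sigma_\psi^{-1} (y-\mu_\phi))$, so taking $-2\ln(\cdot)$ of both sides immediately yields $-2\ln p(y) = \ln|\Sigma_\psi| + (y-\mu_\phi)^T \Sigma_\psi^{-1}(y-\mu_\phi) + s\ln(2\pi)$, which is Equation~(\ref{eq:log_multivariate_normal_pdf}) with the constant equal to $s\ln(2\pi)$ (the only $\phi$- and $\psi$-independent term).

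For the gradient with respect to $\phi_i$, observe that only the quadratic form depends on $\phi$, and it does so solely through $\mu_\phi$. Setting $u := y - \mu_\phi$ and using the symmetry of $\Sigma_\psi^{-1}$, the chain rule gives $\tfrac{d}{d\phi_i}\, u^T \Sigma_\psi^{-1} u = 2\, u^T \Sigma_\psi^{-1}\, \tfrac{du}{d\phi_i} = -2 (y-\mu_\phi)^T \Sigma_\psi^{-1}\, \tfrac{d\mu_\phi}{d\phi_i}$, which is exactly Equation~(\ref{eq:log_multivariate_normal_pdf_mean_derivative}).

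For the gradient with respect to $\psi_i$, both $\ln|\Sigma_\psi|$ and the quadratic form contribute. Jacobi's formula gives $\tfrac{d}{d\psi_i}\ln|\Sigma_\psi| = \mathrm{Tr}(\Sigma_\psi^{-1}\, \tfrac{d\Sigma_\psi}{d\psi_i})$, and the identity $\tfrac{d}{d\psi_i}\Sigma_\psi^{-1} = -\Sigma_\psi^{-1}\, (\tfrac{d\Sigma_\psi}{d\psi_i})\, \Sigma_\psi^{-1}$ gives $\tfrac{d}{d\psi_i}(y-\mu_\phi)^T \Sigma_\psi^{-1} (y-\mu_\phi) = -(y-\mu_\phi)^T \Sigma_\psi^{-1} (\tfrac{d\Sigma_\psi}{d\psi_i}) \Sigma_\psi^{-1} (y-\mu_\phi)$. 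Since the right-hand side is a scalar it equals its own trace, and cyclicity of the trace rewrites it as $-\mathrm{Tr}(\Sigma_\psi^{-1}(y-\mu_\phi)(\Sigma_\psi^{-1}(y-\mu_\phi))^T\, \tfrac{d\Sigma_\psi}{d\psi_i})$. Adding the two contributions and pulling $\tfrac{d\Sigma_\psi}{d\psi_i}$ out on the right inside a single trace produces Equation~(\ref{eq:log_multivariate_normal_pdf_covariance_derivative}).

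I do not expect a serious obstacle here; the statement is a routine consequence of the Gaussian log-density together with two standard matrix-calculus identities, which I would invoke as known facts rather than re-derive. The only points requiring mild care are (i) applying the chain rule correctly when $\mu_\phi$ is vector-valued and $\Sigma_\psi$ matrix-valued functions of the scalar parameters, so that $\tfrac{d\Sigma_\psi}{d\psi_i}$ is itself a (symmetric) matrix and the trace manipulations are legitimate, and (ii) using the symmetry of $\Sigma_\psi^{-1}$ to fold the two halves of the quadratic-form derivative into the single factor of $2$ in the $\phi_i$ case.
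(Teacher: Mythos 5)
Your proposal is correct and follows essentially the same route as the paper's proof: take $-2\ln$ of the explicit Gaussian density, differentiate the quadratic form in $\mu_\phi$ using symmetry of $\Sigma_\psi^{-1}$, and combine Jacobi's formula with the inverse-derivative identity and cyclicity of the trace for the $\psi_i$ gradient. The only cosmetic difference is that the paper routes the $\phi_i$ computation through a trace of $\Sigma_\psi^{-1}\frac{d}{d\phi_i}(y-\mu_\phi)(y-\mu_\phi)^T$ whereas you apply the chain rule to the quadratic form directly; the two are the same calculation.
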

\begin{proof}[Proof of Lemma~\ref{lem:marginal_likelihood_maximization}]
    Equation~(\ref{eq:log_multivariate_normal_pdf}) follows immediately by taking the logarithm of the multivariate normal probability density function, given by
    \begin{equation*}
        p(y) = \tfrac{1}{\sqrt{(2\pi)^d |\Sigma_\psi|}} \exp(-(Y-\mu_\phi)^T \Sigma_\psi^{\text{-}1}(Y-\mu_\phi)/2).
    \end{equation*}
    Equation~(\ref{eq:log_multivariate_normal_pdf_mean_derivative}) follows from Equation~(\ref{eq:log_multivariate_normal_pdf}) since
    \begin{align*}
        \frac{d}{d\phi_i} - 2 \ln p(y) & = \mathrm{Tr}(\Sigma_\psi^{\text{-}1} \frac{d}{d \phi_i} (y-\mu_\phi) (y-\mu_\phi)^T) \nonumber\\
        & =
        -2 \mathrm{Tr}(\Sigma_\psi^{\text{-}1} (y-\mu_\phi) \frac{d}{d\phi}_i \mu_\phi^T)\\
        & =
        -2 (y-\mu_\phi)^T \Sigma_\psi^{\text{-}1} \frac{d}{d\phi}_i \mu_\phi,
    \end{align*}
    where we have used symmetry of $\Sigma_\psi^{\text{-}1}$ in the last step. Likewise, Equation~(\ref{eq:log_multivariate_normal_pdf_covariance_derivative}) results from Equation~(\ref{eq:log_multivariate_normal_pdf}) through the following chain of equalities:
    \begin{align*}
        \frac{d}{d\psi_i}  - 2 \ln p(y)
        = & \mathrm{Tr}(\Sigma^{\text{-}1}_\psi \frac{d}{d \psi_i} \Sigma_\psi) + \mathrm{Tr}(\frac{d}{d \psi_i} \Sigma_\psi^{\text{-}1}(y - \mu_\phi) (y - \mu_\phi)^T)\\
        = & \mathrm{Tr}(\Sigma^{\text{-}1}_\psi \frac{d}{d \psi_i} \Sigma_\psi - \Sigma_\psi^{\text{-}1} \frac{d\Sigma_\psi}{d \psi_i} \Sigma_\psi^{\text{-}1}(y - \mu_\phi) (y - \mu_\phi)^T)\\
        = & \mathrm{Tr}(\Sigma^{\text{-}1}_\psi \frac{d}{d \psi_i} \Sigma_\psi - \Sigma_\psi^{\text{-}1}(y - \mu_\phi)(y - \mu_\phi)^T \Sigma_\psi^{\text{-}1} \frac{d\Sigma_\psi}{d \psi_i} )\\
        = & \mathrm{Tr}((\Sigma_\psi^{\text{-}1} - \Sigma_\psi^{\text{-}1} (y-\mu_\phi)(\Sigma_\psi^{\text{-}1} (y-\mu_\phi))^T) \frac{d}{d\psi_i} \Sigma_\psi),
    \end{align*}
    where we have used that $\tfrac{d}{dx} |M| = |M| \cdot \mathrm{Tr}(M^{\text{-}1} \tfrac{d}{dx} M)$ and that $\tfrac{d}{dx} M^{\text{-}1} = - M^{\text{-}1} \tfrac{d M}{dx} M^{\text{-}1}$ for $M$ a matrix-valued function of $x$, as well as symmetry of $\Sigma_\psi^{\text{-}1}$.
\end{proof}

\section{Use of Large Language Models}
Large language models were used to aid and polish writing, in particular by giving feedback on the clarity of this paper, and to a minor extent for retrieval and discovery (i.e., finding related work). No large language model generated content was directly included without further adjustments.

\end{document}